\documentclass[letterpaper]{article} 
\usepackage{aaai24}  
\usepackage{times}  
\usepackage{helvet}  
\usepackage{courier}  
\usepackage[hyphens]{url}  
\usepackage{graphicx} 
\urlstyle{rm} 
\usepackage{natbib}  
\usepackage{caption} 
\frenchspacing  
\setlength{\pdfpagewidth}{8.5in}  
\setlength{\pdfpageheight}{11in}  
%
\usepackage{algorithm}
\usepackage{algpseudocode}
\usepackage{amsmath,amsthm}
\usepackage{amssymb}
\usepackage{subfig}
\usepackage{adjustbox}
\usepackage{booktabs}
\usepackage{epsfig}
\usepackage{color}

%
\usepackage{newfloat}
\usepackage{listings}
\DeclareCaptionStyle{ruled}{labelfont=normalfont,labelsep=colon,strut=off} 
\lstset{%
	basicstyle={\footnotesize\ttfamily},
	numbers=left,numberstyle=\footnotesize,xleftmargin=2em,
	aboveskip=0pt,belowskip=0pt,%
	showstringspaces=false,tabsize=2,breaklines=true}
\floatstyle{ruled}
\newfloat{listing}{tb}{lst}{}
\floatname{listing}{Listing}
%
\pdfinfo{
/TemplateVersion (2024.1)
}
\newcommand{\E}{\mathbb{E}}
\newcommand{\R}{\mathbb{R}}
\newcommand{\eg}{e.g.}
\theoremstyle{plain}
\newtheorem*{theorem*}{Theorem}
\newtheorem{theorem}{Theorem}[section]

\newtheorem{lemma}[theorem]{Lemma}
\newtheorem{corollary}[theorem]{Corollary}
\theoremstyle{definition}
\newtheorem{definition}[theorem]{Definition}

\newtheorem{remark}[theorem]{Remark}

\setcounter{secnumdepth}{2} 

%


\title{Generalizing across Temporal Domains with Koopman Operators}
\author{
    Qiuhao Zeng\textsuperscript{\rm 1}, Wei Wang\textsuperscript{\rm 1}, Fan Zhou\textsuperscript{\rm 2}, Gezheng Xu\textsuperscript{\rm 1}, Ruizhi Pu\textsuperscript{\rm 1}, Changjian Shui\textsuperscript{\rm 3},\\
    Christian Gagné\textsuperscript{\rm 4}, Shichun Yang\textsuperscript{\rm 2}, Boyu Wang\textsuperscript{\rm 1, 3}\thanks{Corresponding authors: Fan Zhou, Charles X. Ling, Boyu Wang.}, Charles X. Ling\textsuperscript{\rm 1}
}
\affiliations{
    \textsuperscript{\rm 1}The University of Western Ontario \\
    \textsuperscript{\rm 2}Beihang University \\
    \textsuperscript{\rm 3}Vector Institute \\
    \textsuperscript{\rm 4}Université Laval \\


%
}

\usepackage{bibentry}

\begin{document}

\maketitle

\begin{abstract}
In the field of domain generalization, the task of constructing a predictive model capable of generalizing to a target domain without access to target data remains challenging. This problem becomes further complicated when considering evolving dynamics between domains. While various approaches have been proposed to address this issue, a comprehensive understanding of the underlying generalization theory is still lacking. In this study, we contribute novel theoretic results that aligning conditional distribution leads to the reduction of generalization bounds. Our analysis serves as a key motivation for solving the Temporal Domain Generalization (TDG) problem through the application of Koopman Neural Operators, resulting in Temporal Koopman Networks (TKNets). By employing Koopman Operators, we effectively address the time-evolving distributions encountered in TDG using the principles of Koopman theory, where measurement functions are sought to establish linear transition relations between evolving domains. Through empirical evaluations conducted on synthetic and real-world datasets, we validate the effectiveness of our proposed approach.
\end{abstract}

\section{Introduction}
\label{sec:intro}

Modern machine learning techniques have achieved unprecedented success over the past decades in numerous areas.
However, one fundamental limitation of most existing techniques is that a model trained on one dataset cannot generalize well on another dataset if it is sampled from a different distribution. Domain generalization (DG) aims to alleviate the prediction gap between the observed source domains and an \emph{unseen} target domain by leveraging the knowledge extracted from multiple source domains~\cite{wjd_24, multi2, wjd_88,li2018learning}. 


Existing DG methods can be roughly categorized into three groups: data augmentation / generation, disentangled / domain-invariant feature learning, and meta-learning \cite{jd_survey}. In many real-world applications, the temporal dynamics across domains are common and can be leveraged to improve accuracy for the unseen target domain \cite{usgda, laed, cida}. However, one intrinsic problem with these existing DG methods is that most of them treat all the domains equally and ignore the relationship between them, implicitly assuming that they are all sampled from a stationary environment.
For example, it is common that source domains are constituted of images collected over the last few years and the target domain is the unseen future. 
For geological applications, the source samples can be collected along different altitudes, longitude, and latitude, while the target is to generalize to some regions where the data is absent due to inaccessibility.
Medical data is also often collected with age or other indicators as intervals, and we hope the model can perform well on younger or elder age groups where the samples may be rare.
As a more concrete example, Fig.~1(a) shows several instances from the rotated MNIST (RMNIST) dataset, a widely used benchmark in the DG literature, where the digit images of each subsequent domain are rotated by $15^\circ$. Fig.~1(b) reports the generalization performances of several state-of-the-art DG algorithms on the data set, from which it can be clearly observed that the performances drop when deploying the models on outer domains (i.e., 
domains of 0 and 75 degrees). The results indicate that the algorithms ignore the evolving pattern between the domains. Consequently, they are good at ``interpolation" but not at ``extrapolation". 
\begin{figure*}[!tb]
		\subfloat[]{\includegraphics[width=0.33\textwidth]{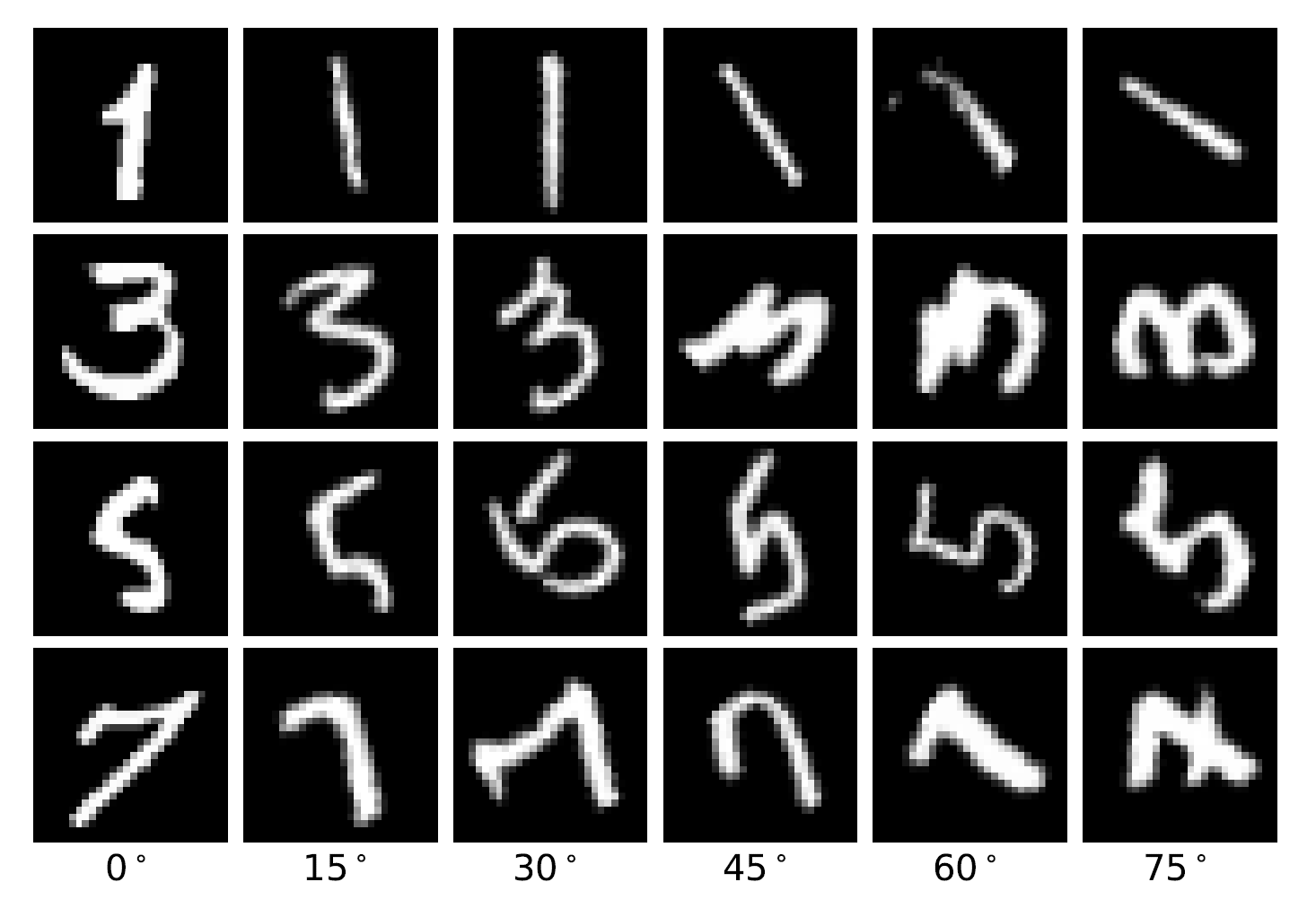}\label{motivation_figure:rmnist}}
		\subfloat[]{\includegraphics[width=0.33\textwidth]{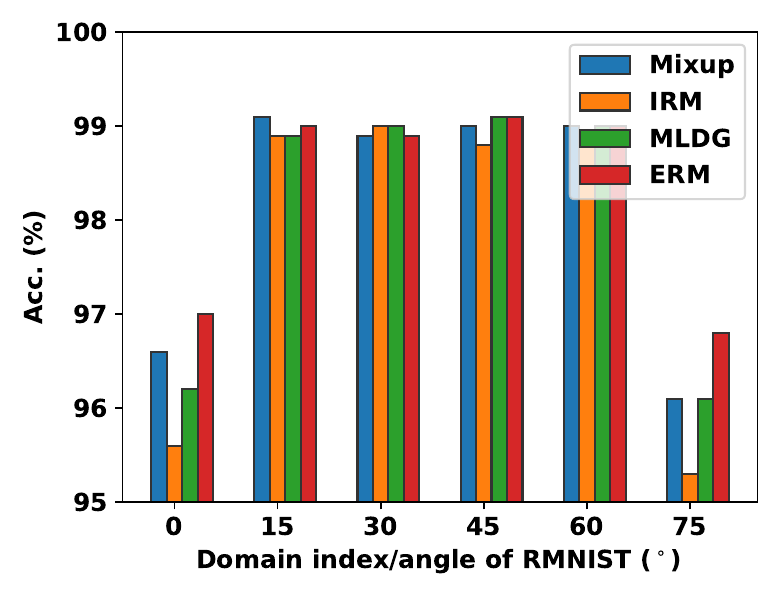}\label{motivation_figure:acc_baselines}}
		\subfloat[]{\includegraphics[width=0.33\textwidth]{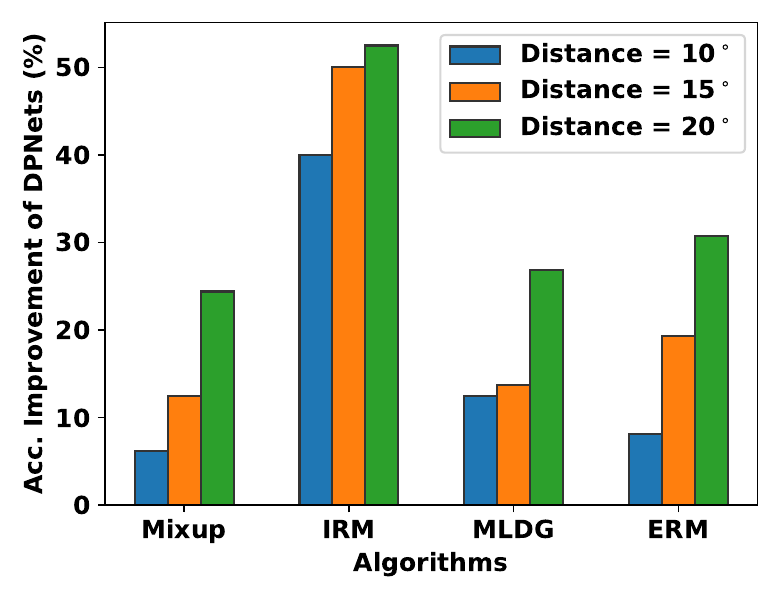}\label{motivation_figure:acc_forward_backward}}
\caption{(a) Evolving manner among RMNIST domains. (b) Accuracy of traditional DG methods on evolving domains. These methods cannot generalize well on outer domains ($0^\circ$ and $75^\circ$). (c) Comparison between the performance of our method and baselines on outer domains. The proposed method outperforms all the baselines.}
\label{motivation_figure}
\end{figure*}

In this paper, we address this learning scenario as \emph{temporal domain generalization} (TDG)~\cite{zeng2023foresee,bai2022temporal,gi,lssae}, which aims to capture and exploit the temporal dynamics in the environment. TDG aims to generalize to a target domain along a specific direction by extracting and leveraging the relations between source domains. Specifically, we develop a novel theoretical analysis that highlights the importance of modeling the relation between two consecutive domains to extract the evolving pattern of the environment. 
Koopman theory \cite{koopman1931hamiltonian} states that any complex dynamics can be modeled by a linear Koopman operator acting on the space of measurement functions. Inspired by our theoretical results, we propose to capture the temporal dynamics using the Koopman operator and align the distribution of data in the Koopman space. As a comparison, Fig. 1(c) shows the performance improvement of TKNets over the other algorithms on RMNIST dataset. It can be observed that the performance gap between TKNets and the other baselines has widened as the domain distance increases. More details can be found in Sec.~\ref{sec:exp}.

Here, we emphasize the key difference between Temporal Domain Genralization~\cite{bai2022temporal,gi,lssae} and Temporal Domain Adaptation \cite{cudaal,laed,cida}. While both learning paradigms aim to tackle the issue of evolving domain shifts, the latter still requires unlabeled instances from the target domain. In this sense, TDG is more challenging, and existing theoretical and algorithmic results cannot be applied to this problem directly. 

We summarize our \textbf{contributions} as three-fold: (1) We derive theoretical analysis to the generalization bound of the TDG problem, which highlights the importance of learning the transition function to mitigate the temporal domain shifts; (2) Motivated by the theoretic results, we propose a novel algorithm TKNets, which learns the complex and nonlinear dynamics based on Koopman theory; (3) We conduct experiments on synthetic and real-world datasets, and the empirical results suggest an improved performance.



\section{Related Work}

\paragraph{Domain Generalization (DG).} Domain generalization aims to train a model which generalizes on all domains. Existing DG methods can be classified into three categories. The first and most popular category is representation learning, which focuses on learning a common representation across domains. It can be achieved by domain-invariant representation learning \cite{wjd_24,wjd_59,wjd_77,wjd_88} and feature disentanglement \cite{wjd_93, wjd_101}. The former focuses on aligning latent features across domains, and the latter tries to distill domain-shared features. Secondly, data manipulation can also empower the model with generalization capability. Data manipulating techniques include data augmentation \cite{wjd_28, shankar2018generalizing}, which usually extends the dataset by applying specific transformations on existing samples, and data generation \cite{wjd_38}, which often applies neural networks to generate new samples.
\cite{dirt} converts DG to an infinite-dimensional constrained statistical learning problem under a natural model of data generation.
The theoretically grounded method proposed in \cite{dirt} leverages generating model among domains to learn domain-invariant representation. 
The third most commonly used method is meta-learning. The meta-learning framework \cite{li2018learning, wjd_13, wjd_14} is used to improve the generalizing capability by simulating the shift among domains. 
Apart from the above three categories, \cite{wjd_104} tries to ensemble multiple models into a unified one that can generalize across domains. 
The DRO-based methods \cite{rahimian2019distributionally} which aim to learn a model at worst-case distribution scenario also match the target of DG well. Besides, gradient operation \cite{wjd_112}, self-supervision \cite{wjd_114} and random forest \cite{wjd_115} are also exploited to improve generalizing capability.
Different from the existing DG methods that focus on learning one unified model for all domains, our approach tries to train a prediction model for the target domain by leveraging the evolving pattern among domains.
    \paragraph{Temporal Domain Adaptation (TDA) / Temporal Domain Generalization (TDG).} Many previous works in the domain adaptation area notice the evolving pattern of domains and leverage it to improve performance in different settings. \cite{laed} proposes a meta-adaptation framework that enables the learner to adapt from one single source domain to continually evolving target domains without forgetting. \cite{usgda} focuses on adapting from source domains to the target domain with large shifts by leveraging the unlabeled intermediate samples. \cite{cida} combines the traditional adversarial adaptation strategy with a novel regression discriminator that models the encoding-conditioned domain index distribution. \cite{chen2021gradual} investigate how to discover the sequence of intermediate domains without index information and then adapt to the final target. 
    TDG recently has been actively studied. \cite{lssae} introduce a probabilistic model to mitigate the covariate shift and concept shift. \cite{gi} design gradient interpolation (GI) loss to penalize the curvature of the learned model along time and design TReLU activation whose parameters are tuned w.r.t. time. \cite{bai2022temporal} proposed a novel Bayesian framework to explicitly model the concept drift over time. \cite{zeng2023foresee} proposed to learn the evolving patterns in the framework of meta-learning. However, existing TDG methods have not explored the theoretic generalization bound of TDG problem. In this paper, our theoretical results are based on proposed $\lambda$-consistency, an intuitive and realistic measurement of evolving levels in the environments.
    
\section{Background}

\subsection{Koopman Theory}
In discrete-time dynamical systems, the formulation of state transitions is as $v_{t+1} = F(v_t)$, where $v\in \mathcal{V}\subseteq\mathbb{R}^d$ represents the system state and $F$ characterizes the vector field governing the system dynamics. However, it's difficult to directly capture the system dynamics due to the presence of nonlinearity or noisy data. To address this concern, Koopman theory \cite{koopman1931hamiltonian} hypothesizes that the system state can be effectively projected onto an infinite-dimensional Hilbert space defined by measurement function $\mathcal{G}(\mathcal{V}):=\{g:\mathcal{G}\rightarrow \mathbb{R}\}$. This projected space can then be advanced forward in time through an infinite-dimensional linear operator $\mathcal{K}$, hence
\begin{equation}
    \mathcal{K}\circ g(v_t) = g(F(v_t)) = g(v_{t+1}).
\end{equation}
The Koopman operator facilitates the transition of observations of the state to subsequent time steps by mapping between function spaces~\cite{li2020learning}. In this work, we employ measurement functions to effectively map encoded features to the functional space. By adopting Koopman theory, we can learn the transition model, which characterizes the temporal dynamics governing the system's behavior.

\subsection{Temporal Domain Generalization}
We aim to propose a formal understanding of the generalization bound in predicting the unseen target environment. Let $\{\mathcal D_1(z,y),\mathcal D_2(z,y),...,\mathcal D_m(z,y)\}$ be $m$ observed source domains sampled from an environment $\mathcal{E}$, 
where $z\in \mathcal{Z}$ and $y \in \mathcal{Y}$ are, respectively, the data Koopman embedding and its corresponding label, and $\mathcal{D}_i(z,y)$ characterizes the joint probability over the $i$-th domain. The embeddings $z$ are mapped by a composite embedding function such that $\mathcal{G}\circ\phi(\mathcal{X})\rightarrow\mathcal{Z}$, where $x\in \mathcal{X}$ is the input raw data, $\phi$ is a normal embedding neural network, and $\mathcal{G}$ is the measurement function. The goal of TDG is to learn a hypothesis $h\in \mathcal{H}$ so that it can have a low risk on an unseen but time-evolving target domain $\mathcal{D}_t$:
\begin{align*}
    R_{\mathcal{D}_t}(h) \triangleq \mathbb E_{(z,y)\sim \mathcal D_t}[\ell(h(z),y)]
\end{align*}




\noindent where $\ell:\mathcal Y\times\mathcal Y\rightarrow \mathbb R_+$ is a non-negative loss function, and $\mathcal{H}$ is a hypothesis class that maps $\mathcal Z$ to the set $\mathcal Y$. 
The objective of TDG is to generalize the model on $\mathcal{D}_t$ along a specific direction when there is an underlying evolving pattern between the source domains and $\mathcal{D}_t=\mathcal{D}_{m+1}$. 

\section{Theoretical Analysis and Methodology}
\subsection{Theoretical Motivations} \label{assumption_comparison}

Existing theoretical studies of temporal domain \cite{cida, usgda, laed, hanzhao_gda} make assumptions that "\textbf{consecutive domains are similar}", where the similarity is characterized by some assumption on a distributional distance $d(\mathcal D_t,\mathcal D_{t+1})$. For instance, \cite{usgda,lssae} assume that $d(\mathcal D_t,\mathcal D_{t+1})<\epsilon$, and the assumption in \cite{laed} is $d_{\mathcal H\Delta\mathcal H}(\mathcal D_{t_1},\mathcal D_{t_2})\leq \alpha|t_1-t_2|$. While such assumptions and the corresponding theoretical frameworks are intuitive, 
      the similarity assumption does not properly characterize the temporal dynamics. Instead, we argue that we should focus on \emph{consistency} of the environment instead of \emph{similarity} of consecutive domains. With the Koopman theory applied, we ensure that consistency holds between the temporal domains~\cite{azencot2020forecasting}. 

To capture the consistent evolving pattern in the Koopman space of $\mathcal{E}$, it is reasonable to assume that such a pattern can be \emph{approximately} captured by the governing function $\mathcal{K}: \mathcal{Z} \rightarrow \mathcal{Z}$
in a way such that the \emph{forecasted} domain $\mathcal{D}_{i+1}^\mathcal{K} \triangleq  \mathcal{D}_i(\hat{z},y)$ is close to $\mathcal{D}_{i+1}$ as much as possible, where $\hat{z}=\mathcal{K} \circ \mathcal{G}\circ\phi(x)$. Then, the forecasted domain $\mathcal{D}^\mathcal{K}_t$ (transformed from the last source domain $\mathcal{D}_m$) can be adopted as an alternative to $\mathcal{D}_t$ for domain generalization if $\mathcal{K}$ can properly capture the evolving pattern. Intuitively, capturing the evolving pattern in $\mathcal{E}$ is hopeless if it varies arbitrarily. 
On the other hand, if the underlying pattern is consistent over domains in the Koopman space, it is reasonable to assume that there exists a Koopman operator $\mathcal{K}^*$ that would perform consistently well over all the domain pairs. 
To this end, we introduce the notion of \emph{$\lambda$-consistency} of an environment $\mathcal{E}$.


\begin{definition}[$\lambda$-Consistency] 
\label{def1}
We define the ideal mapping function in the worst-case domain: $\mathcal{K}^* = \arg\min_{\mathcal{K}} \max_{\mathcal{D}_i\in\mathcal{E}} d_{}(\mathcal{D}_i||\mathcal{D}^\mathcal{K}_i)$, where $\mathcal{D}_{i+1}^\mathcal{K} \triangleq \mathcal{D}_i(\hat{z},y)$. Then, an evolving environment $\mathcal{E}$ is \emph{$\lambda$-consistent} if the following holds:
\begin{align*}
    |d_{}(\mathcal D_i||\mathcal D^{\mathcal{K}^*}_{i})-d_{}(\mathcal D_j||\mathcal D^{\mathcal{K}^*}_{j})| \le \lambda, \qquad \forall \mathcal{D}_i, \mathcal{D}_j \in \mathcal{E}.
\end{align*}
\end{definition}
\noindent where $d_{}(\mathcal D_t||\mathcal D^\mathcal{K}_t)$ is the Kullback–Leibler (KL) divergence between $\mathcal D^\mathcal{K}_t$ and $\mathcal D_t$~\cite{jsd}.

Our definition of $\lambda$-consistency offers a fundamentally new perspective to depict the temporal pattern. The distance is between forecasted domains and real domains, without any assumptions about the distance of domain pairs in $\mathcal E$. 
It is worth mentioning that $\lambda$-consistency is not an assumption. Instead, it is only a value that depicts how \emph{consistently} environment evolves in $\mathcal{E}$. 
A small $\lambda$ means that the environment evolves steadily, which means the future domains are predictable and can be leveraged for performance improvement. 
Reversely, there are some cases in which the evolving mechanism is too complex, non-Markov, or too random, which will result in a large $\lambda$. 

We show the generalization bound between $R_{\mathcal D_t}(h)$ and $R_{\mathcal D_t^{\mathcal{K}^*}}(h)$ as follows, the complete proof is deferred to the Section \ref{appendix_theory} in Appendix due to space limitations:

\begin{theorem}
\label{theoremds}
Let $\{\mathcal D_1,\mathcal D_2,...,\mathcal D_m\}$ be $m$ observed source domains sampled sequentially from an evolving environment $\mathcal{E}$, and $\mathcal{D}_t$ be the next unseen target domain: $\mathcal{D}_t = \mathcal{D}_{m+1}$. $G$ is the range of the interval of the loss function. Then, if $\mathcal{E}$ is $\lambda$-consistent, we have
{\small
\begin{align} 
 &R_{\mathcal D_t}(h) \leq R_{\mathcal D_t^{\mathcal{K}^*}}(h)+\frac{G}{\sqrt{2(m-1)}} \times\nonumber\\
 &\Bigg({\sqrt{\sum _{i=2}^{m}\mathbb E_{y\sim \mathcal D_{i}(y)}d_{}(\mathcal D_{i}(z|y)||\mathcal D^{\mathcal{K}^*}_{i}(z|y))}}+\sqrt{(m-1)\lambda}\Bigg)
 \label{eqn:bound}
\end{align}
}
\end{theorem}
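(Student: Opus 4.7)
The plan is to bound $R_{\mathcal D_t}(h)-R_{\mathcal D_t^{\mathcal{K}^*}}(h)$ by a statistical distance between $\mathcal D_t$ and $\mathcal D_t^{\mathcal{K}^*}$, then use $\lambda$-consistency to transfer that distance from the unseen target to the observed source domains, and finally split the resulting square root into the two summands in the right-hand side. Concretely, the first step is to write
\[
R_{\mathcal D_t}(h)-R_{\mathcal D_t^{\mathcal{K}^*}}(h)=\mathbb E_{\mathcal D_t}[\ell(h(z),y)]-\mathbb E_{\mathcal D_t^{\mathcal{K}^*}}[\ell(h(z),y)]
\]
and, using that $\ell$ has range $G$, bound this difference by $G\cdot \mathrm{TV}(\mathcal D_t,\mathcal D_t^{\mathcal{K}^*})$. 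Pinsker's inequality then gives $\mathrm{TV}(\mathcal D_t,\mathcal D_t^{\mathcal{K}^*})\le\sqrt{\tfrac{1}{2}\,d(\mathcal D_t\|\mathcal D_t^{\mathcal{K}^*})}$, so the problem reduces to controlling the KL divergence on the (unseen) target.

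Next, I would invoke $\lambda$-consistency to eliminate the dependence on $\mathcal D_t$. Since Definition \ref{def1} yields $d(\mathcal D_t\|\mathcal D_t^{\mathcal{K}^*})\le d(\mathcal D_i\|\mathcal D_i^{\mathcal{K}^*})+\lambda$ for every source index $i\in\{2,\dots,m\}$, averaging over $i$ gives
\[
d(\mathcal D_t\|\mathcal D_t^{\mathcal{K}^*})\le\frac{1}{m-1}\sum_{i=2}^m d(\mathcal D_i\|\mathcal D_i^{\mathcal{K}^*})+\lambda.
\]
Then I would use the chain rule for KL divergence, $d(\mathcal D_i\|\mathcal D_i^{\mathcal{K}^*})=d(\mathcal D_i(y)\|\mathcal D_i^{\mathcal{K}^*}(y))+\mathbb E_{y\sim\mathcal D_i(y)}\bigl[d(\mathcal D_i(z|y)\|\mathcal D_i^{\mathcal{K}^*}(z|y))\bigr]$, together with the fact that by construction the Koopman pushforward acts only on $z$ while keeping $y$ fixed (so the label marginals agree, $\mathcal D_i^{\mathcal{K}^*}(y)=\mathcal D_{i-1}(y)$ and this marginal mismatch can be absorbed or assumed to vanish when the environment preserves the label distribution), leaving only the expected conditional KL term that appears in the statement. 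The last step is the elementary inequality $\sqrt{a+b}\le\sqrt{a}+\sqrt{b}$ applied to $a=\tfrac{1}{2(m-1)}\sum_{i=2}^m\mathbb E[d(\mathcal D_i(z|y)\|\mathcal D_i^{\mathcal{K}^*}(z|y))]$ and $b=\tfrac{\lambda}{2}$, after which rewriting $\sqrt{\lambda/2}=\sqrt{(m-1)\lambda}/\sqrt{2(m-1)}$ produces exactly the bound in \eqref{eqn:bound}.

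The main obstacle I expect is the decomposition step where the joint KL divergence collapses to only the conditional-on-$y$ term: it relies on the Koopman map acting purely in the $z$-coordinate and on an implicit alignment of label marginals across consecutive domains. If that alignment fails, an extra $d(\mathcal D_i(y)\|\mathcal D_i^{\mathcal{K}^*}(y))$ term would have to be carried along, either absorbed into $\lambda$ or controlled by a separate assumption on the environment. The remaining ingredients (bounded-loss-to-TV, Pinsker, and $\sqrt{a+b}\le\sqrt a+\sqrt b$) are standard and largely routine once the correct target-divergence-to-source-divergence transfer has been set up through the definition of $\lambda$-consistency.
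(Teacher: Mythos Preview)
Your proposal is correct and matches the paper's proof almost step for step: risk gap $\le \tfrac{G}{\sqrt 2}\sqrt{d(\mathcal D_t\|\mathcal D_t^{\mathcal{K}^*})}$, then $\lambda$-consistency plus averaging over $i=2,\dots,m$, then $\sqrt{a+b}\le\sqrt a+\sqrt b$, and finally the KL chain rule with the label-marginal term dropped under a reweighting assumption (the paper isolates this last step as a separate lemma and corollary, exactly as you anticipated in your ``main obstacle'' paragraph). The one substantive difference is how the first inequality is obtained: the paper proves it via a Donsker--Varadhan change-of-measure lemma combined with the sub-Gaussian property of a $G$-bounded loss and an optimization over the free parameter, whereas your total-variation-plus-Pinsker route is more elementary and lands on the identical constant $G/\sqrt 2$.
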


\noindent\textbf{Discussion} We note two key theoretical differences between TDG and previous studies of DA in evolving environments \cite{david,laed}.
(1)~In Domain Adaptation (DA), the target risk is bounded in terms of source and target domains (\eg, $\mathcal{H}\Delta \mathcal{H}$-divergence), while our analysis relies on the distance between forecasted and real domains. (2)~DA theories are built upon the assumption that there exists an ideal joint hypothesis that achieves a low combined error on both domains, while our assumption is the $\lambda$-consistency in the Koopman space of $\mathcal{E}$. The theoretical result motivates us to design algorithms with Koopman operators, which directly minimizes the KL-term in Eq.~(\ref{eqn:bound}). We further bridge the gap between the theory and the method in Section~\ref{sec:theory-to-implement}.

\subsection{Proposed Methods}\label{sec:algo}
We propose Temporal Koopman Networks (TKNets), a deep neural network model based on Koopman theory to capture the evolving patterns between the Temporal Domains. 
Our analysis reveals two strategies for designing TKNets:
\begin{itemize}
    \item[(i)] Learning the Koopman operator $\mathcal{K}$ to capture the evolving pattern by minimizing the distance between the distributions of forecasted and real domains in the Koopman space.  
    \item[(ii)]  Note that {\small $\mathcal D^{\mathcal{K} }_{i+1} = \mathcal{K} \circ \mathcal{G} \circ \phi(\mathcal{D}_i)$} is produced from $\mathcal{D}_i$, but its quality is evaluated on $\mathcal{D}_{i+1}$. Minimizing {\small $d_{}(\mathcal D^{\mathcal{K} }_{i}||\mathcal D_{i})$} naturally leads to the adoption of the Koopman Theory for learning the linear infinite-dimensional operator $\mathcal{K} $.
\end{itemize}

Specifically, the input samples from both $\mathcal D_i$ and $\mathcal D_{i+1}$ are first transformed to a representation space $\mathcal{V}$ induced by an embedding function $\phi: \mathcal X\rightarrow\mathcal V$. Then we use measurement functions $\mathcal{G}:=[g_1,\dots,g_n]$ that span the Koopman space function $\mathcal{G}: \mathcal{V} \rightarrow \mathcal{Z}$ (for example, the first dimension of measurement functions can be $g_1(x)=\sin{x}$) which maps the samples into the Koopman space so that we can find a Koopman operator $\mathcal{K} $ leading to distribution of $\mathcal{K} \circ \mathcal{G} \circ\phi(\mathcal{D}_i)$ aligned with the distribution of $\mathcal{G}\circ\phi(\mathcal{D}_{i+1})$. 
In all, TKNets consists of three components: a embedding function $\phi: \mathcal{X}\rightarrow{V}$, measurement functions $\mathcal{G} :\mathcal{V}\rightarrow Z$, and Koopman operator $\mathcal{K} :\mathcal{G}(\mathcal{V}) \rightarrow \mathcal{G}(\mathcal{V})$. The key idea of TKNets is 
to use $\{\phi, \mathcal{G} , \mathcal{K} \}$ to capture the evolving pattern of $\mathcal{E}$ by mapping the data in the nonlinear dynamic system into the Koopman space such that the transition function can be modeled by an infinite-dimensional linear Koopman operator. We actually align the distribution of $\mathcal{D}^{\mathcal{K} }_{i+1}(z|y)$ and $\mathcal{D}_{i+1}(z|y)$, as suggested by Theorem~\ref{theoremds}. We implicitly minimize the distance between class-conditional semantic centroids \cite{xie2018learning,shui2021aggregating}, which is an approximation of the semantic conditional distribution of each class. We illustrate the relationship between theory and practical implementation in Sec. \ref{sec:theory-to-implement}.


Let $\mathcal{S}_i = \{(x_n^i,y_n^i)\}_{n=1}^{N_i}$ be the data set of size $N_i$ sampled from $\mathcal{D}_i$, and $\mathcal{S}_i^k$ be the subset of $\mathcal S_i$ with class $k\in\{1,...,K\}$, where $K$ is the total number of classes. In TKNets, the semantic centroids are computed from the support set $\mathcal{S}_i$ through the composite function $\mathcal{K}\circ\mathcal{G}\circ \phi$. Since the Koopman operator captures the transition relationship between two consecutive domains, the computed centroids from $\mathcal{S}_i$ are the forecasted centroids in the domain $i+1$. Then, the forecasted centroid of domain $i+1$ is the mean vector of the \emph{support} instances belonging to $\mathcal{S}_i^k$:
{\small
\begin{align*}
    c_{i+1}^k=\frac{1}{|\mathcal{S}^k_i|}\sum_{(x_n^i,y_n^i)\in \mathcal{S}^k_i}\mathcal{K}\circ\mathcal{G}\circ  \phi(x_n^i)
\end{align*}
}

The query instances are from $\mathcal{S}_{i+1}$ and are passed through the composite embedding function $\mathcal{G}\circ  \phi$. The predictive distribution for $\mathcal{D}_{i+1}$ is given by 
{\small
\begin{align}\label{eq1}
    \mathcal{D}(y^{i+1}=k|x^{i+1})=\frac{\exp(-d_{\text{eu}}(\mathcal{G}\circ \phi(x^{i+1}), c^k_i))}{\sum_{k'=1}^{K}\exp(-d_{\text{eu}}(\mathcal{G}\circ \phi(x^{i+1}), c^{k'}_i))},
\end{align}}where $d_{\text{eu}}: \mathcal{Z}\times\mathcal{Z}\rightarrow [0,+\infty)$ is a euclidean distance function of embedding space, and we adopt squared Euclidean distance in our implementation, as suggested in~\cite{proto}. During the training stage, at each step, we randomly choose the data sets $\mathcal S_i,\mathcal S_{i+1}$ from two consecutive domains as support and query sets, respectively. Then, we sample $N_B$ samples from each class $k$ in $\mathcal S_i$, which is used to compute centroid $c_i^k$ for the query data in $\mathcal{S}_{i+1}$.  Model optimization proceeds by minimizing the negative log probability: 
\begin{equation}
    \label{eqn:loss}
    J_{}=\sum_{i=1}^{m-1}\sum_{k=1}^K\sum_{n=1}^{N_B}-\frac{1}{KN_B}\log \mathcal{D}_{}(y_{n}^{i+1}=k|x_n^{i+1})
\end{equation}

The pseudocode to compute $J$  for a training episode is shown in Algorithm~\ref{alg:cap}. In the testing stage, we pass the instances from $\mathcal S_m$ and $\mathcal S_t$ through $\mathcal{K}\circ\mathcal{G}\circ  \phi$ and $\mathcal{G}\circ  \phi$ respectively as support and query sets and then make predictions for the instances in $\mathcal{D}_t$ using Eq.~(\ref{eq1}).

\begin{algorithm}[tb]
    \caption{TKNets (one episode)}
    \label{alg:cap}
\begin{algorithmic}
    \State \textbf{Input: }  $\{\mathcal S_1,\mathcal S_2,...,\mathcal S_m\}$: $m$ data sets from consecutive domains. $N_B$: the number of support and query instances for each class. $\textsc{RandomSample}(\mathcal S, N)$: a set of $N$ instances sampled uniformly from the set $\mathcal S$ without replacement. 
    \State \textbf{Output: } The loss $J$ for a randomly generated training episode.
    \State $t\leftarrow \textsc{RandomSample}(\{1,...,m\})$ 
    \For{$k$ in $\{1,...,K\}$}
        \State $S^k\leftarrow \textsc{RandomSample}(\mathcal S_{i}^{k},N_B)$
        \State $c^k=\frac{1}{|S^k|}\sum_{(x_j,y_j)\in S^k}\mathcal{K}\circ\mathcal{G}\circ \phi(x_j)$ 
        \State $Q^k\leftarrow \textsc{RandomSample}(\mathcal S_{i+1}^k,N_B)$ 
    \EndFor
    \State $J\leftarrow0$
    \For{$k$ in $\{1,...,K\}$}
        \For {$(x,y)$ in $Q$}
            \State $J\leftarrow J+\frac{1}{KN_B}\Big [d(\mathcal{G}\circ \phi(x),c^k)$\\           
               \State $\qquad\qquad+\log\sum_{k'}\exp(-d(\mathcal{G}\circ  \phi(x),c^k))\Big ]$
        \EndFor
    \EndFor
\end{algorithmic}
\end{algorithm}

\subsection{From the Theory to TKNets}
\label{sec:theory-to-implement}
We further demonstrate the connection between the theory and our proposed method TKNets 
(the complete proof is deferred Section~\ref{appendix:proof_of_loss_kl} in Appendix):
\begin{theorem}

\label{eqn:inequality}
The optimization loss $J$ defined in Eq.~(\ref{eqn:loss}), is the approximation of the upper bound of KL term in Def.~\ref{def1} and an inter-class distance loss, which implies

{
\begin{align}
    &\underbrace{- \sum_{i=2}^{m}E_{y'\sim \mathcal D_{i}(y')} E_{ y\sim \mathcal 
    D_{i}^{\mathcal{K}^*}(y), 
         y\neq y'}  d(\mathcal{D}_{i}(z|y')||\mathcal{D}^{\mathcal{K}^*}_{i}(z|y))}_{\text{Inter-Class Distance loss}} \nonumber\\
   & + \underbrace{\sum_{i=2}^{m}\mathbb E_{y\sim \mathcal D_{i}(y)}  d_{}(\mathcal D_{i}(z|y)||\mathcal D^{\mathcal{K} ^*}_{i}(z|y))}_{\text{KL-Divergence loss in Eq.~(\ref{eqn:bound})}}
    \le \frac{J}{K-1}
\end{align}
}
\end{theorem}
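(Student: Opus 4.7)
The plan is to show that the softmax cross-entropy loss $J$ naturally decomposes, up to additive constants, into a sum of empirical intra-class distances (which approximate the KL terms appearing in Theorem~\ref{theoremds}) minus an averaged sum of empirical inter-class distances (which approximate the negative inter-class KL terms), with the factor $\tfrac{1}{K-1}$ emerging from a Jensen-type bound on the log-sum-exp partition function.

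First I would apply the identity
$$-\log\mathcal{D}(y=k\mid x) = d_{\mathrm{eu}}(\mathcal{G}\!\circ\!\phi(x),\, c^k) + \log\sum_{k'=1}^K\exp\!\bigl(-d_{\mathrm{eu}}(\mathcal{G}\!\circ\!\phi(x),\, c^{k'})\bigr)$$
to each summand of $J$. I would then lower-bound the log-sum-exp factor by first dropping the $k'=k$ term and then applying Jensen's inequality (concavity of $\log$) to the remaining $K-1$ positive terms, yielding
$$\log\sum_{k'\neq k}\exp(-d_{\mathrm{eu}}(\cdot, c^{k'})) \;\geq\; \log(K-1) - \tfrac{1}{K-1}\sum_{k'\neq k}d_{\mathrm{eu}}(\cdot, c^{k'}).$$
Plugging this back and dividing by $K-1$ produces a clean lower bound on $\tfrac{J}{K-1}$ of the form (empirical intra-class distance) $-$ (average empirical inter-class distance), up to a uniform additive constant that can be absorbed.

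Second, I would convert the empirical distances to KL divergences. Following the prototypical-networks interpretation of~\cite{proto}, I would model the Koopman-space class-conditionals as isotropic Gaussians with shared covariance, so that $d(\mathcal{N}(\mu_1,I)\|\mathcal{N}(\mu_2,I)) = \tfrac{1}{2}\|\mu_1-\mu_2\|^2$. Since the centroid $c_i^k = \tfrac{1}{|\mathcal{S}_i^k|}\sum \mathcal{K}\circ\mathcal{G}\circ\phi(x)$ is an empirical mean of samples drawn from $\mathcal{D}_{i+1}^{\mathcal{K}^*}(z\mid y=k)$, a law-of-large-numbers argument gives $\mathbb{E}_{x\sim\mathcal{D}_{i+1}(x\mid y=k)}[d_{\mathrm{eu}}(\mathcal{G}\!\circ\!\phi(x), c_i^k)] = 2\,d(\mathcal{D}_{i+1}(z\mid y=k)\|\mathcal{D}_{i+1}^{\mathcal{K}^*}(z\mid y=k)) + \text{const}$. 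The off-diagonal terms $d_{\mathrm{eu}}(\mathcal{G}\!\circ\!\phi(x), c_i^{k'})$ with $y'=k$ and $y=k'\neq y'$ are identified in the same way with the inter-class KL expression $d(\mathcal{D}_{i+1}(z\mid y')\|\mathcal{D}_{i+1}^{\mathcal{K}^*}(z\mid y))$. Reindexing the outer sum by $j=i+1\in\{2,\dots,m\}$ aligns the ranges in $J$ and in Theorem~\ref{theoremds}, so collecting terms yields the desired inequality.

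The principal obstacle is the isotropic-Gaussian modeling step used to translate squared Euclidean distance into KL divergence; this is exactly what the theorem hedges with the phrase ``approximation of the upper bound,'' and the proof will have to justify the modeling choice (appealing to the flexibility of the learned measurement functions $\mathcal{G}$ to whiten per-class features in Koopman space) or absorb a fixed additive constant from the entropy/trace terms. A secondary bookkeeping task is verifying that replacing empirical averages by expectations introduces only lower-order error under bounded embeddings, which is routine but needs to be stated explicitly so that the empirical left-hand side of the bound matches the population quantities on the right.
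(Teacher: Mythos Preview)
Your overall strategy matches the paper's: expand the softmax cross-entropy, lower-bound the log partition via a Jensen-type inequality, identify squared Euclidean distances between centroids with KL divergences through a shared-covariance Gaussian model (the paper invokes \cite{shui2021aggregating} for the same approximation you attribute to \cite{proto}), and finally reindex $i\mapsto i+1$ to align with the sums in Theorem~\ref{theoremds}. On those points your plan is essentially the paper's proof.

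The gap is in the Jensen step and the constant it produces. The paper first rewrites the per-sample loss as $\log\sum_{k'=1}^{K}\exp(d_k-d_{k'})$ and then applies $\log\sum_{k'=1}^{K} a_{k'}\ge \sum_{k'=1}^{K}\log a_{k'}$ to all $K$ ratio terms; since $a_k=1$ the diagonal contributes $0$, and the surviving coefficient on $d_k$ is $K$, yielding $-\log p(k\mid x)\ge K\,d_k-\sum_{k'}d_{k'}=(K-1)d_k-\sum_{k'\neq k}d_{k'}$. That is precisely where the factor $K-1$ in the statement comes from. Your route---drop the $k'=k$ term from the log-sum-exp and then apply Jensen with the $\tfrac{1}{K-1}$ average---yields instead $-\log p(k\mid x)\ge d_k-\tfrac{1}{K-1}\sum_{k'\neq k}d_{k'}+\log(K-1)$. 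After summing, this reads $J\ge [\text{intra}]-[\text{inter}]+\text{const}$, not $J\ge (K-1)\bigl([\text{intra}]-[\text{inter}]\bigr)$; dividing by $K-1$ then gives $\tfrac{J}{K-1}\ge \tfrac{1}{K-1}\bigl([\text{intra}]-[\text{inter}]\bigr)+\text{const}$, one factor of $K-1$ short of the stated inequality. In other words, your Jensen step already spends the $\tfrac{1}{K-1}$, so the subsequent division is double-counting. To reproduce the paper's constant you must keep all $K$ terms inside the log before applying the Jensen bound, so that the intra-class distance picks up the multiplicative $K$ (and hence $K-1$ after the diagonal cancellation).
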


\noindent\textbf{Discussion} Minimizing the above KL term, which is the distance between the conditional distributions of forecasted and real domains, corresponds to minimizing the KL term in Eq.~(\ref{eqn:bound}), thereby reducing the generalization bound. The first term can be regarded as a regularizer term that achieves maximization of the inter-class dissimilarity.

\section{Experiments}\label{sec:exp}



\subsection{Experimental Setup}
\label{exps_setup}

We evaluated our algorithm on 6 datasets, 4 of which were collected in real-world scenarios (RMNIST \cite{wjd_59}, Portrait \cite{usgda,chen2021gradual}, Cover Type \cite{usgda}), and FMoW \cite{christie2018functional}).


(1) \textbf{Evolving Circle (EvolCircle, Fig. \ref{fig:evolcircle})} consists of 30 evolving domains, where the instances are generated from 30 2D-Gaussian distributions with the same variances but different centers uniformly distributed on a half-circle. 
(2) \textbf{Rotated Plate (RPlate, Fig. \ref{fig:rplate})} consists of 30 domains, where the instances of each domain are generated by the same Gaussian distribution but the decision boundary rotates from $0^\circ$ to $348^\circ$ with an interval of $12^\circ$. 
(3) \textbf{Rotated MNIST (RMNIST)} We randomly select only 2400 instances in the raw MNIST dataset and split them into 12 domains equally. Then we apply the rotations with degree of $\theta=\{0^\circ, 10^\circ, ..., 110^\circ\}$ on each domain respectively. The amount of samples in each domain is only 200, which makes this task more challenging.
(4) \textbf{Portrait} data was originally proposed in \cite{portrait} and has been used as a benchmark dataset for studying evolving domain adaptation \cite{chen2021gradual, usgda} and other problems related to evolving domains~\cite{chen2021active, lei2021near, mancini2019adagraph, zhou2022online}. It aims to predict gender based on the photos of high school seniors across different decades. We divided the dataset into 12 domains by year. 
(5) \textbf{Cover Type} data set of geology aims to predict cover type (the predominant kind of tree cover) from 54 strictly cartographic variables. 
To generate evolving domains, we sort the samples by the ascending order of the distance to the water body, as proposed in \cite{usgda}. Then we equally divided the data set into 10 domains by distance. 
(6) \textbf{FMoW} A large satellite image dataset with target detection and classification tasks \cite{christie2018functional}. We select 5 common classes to compose a classification task. The dataset is divided into 19 domains by time. 

\begin{figure*}[tbp]
		\centering 
            \includegraphics[width=0.88\textwidth]{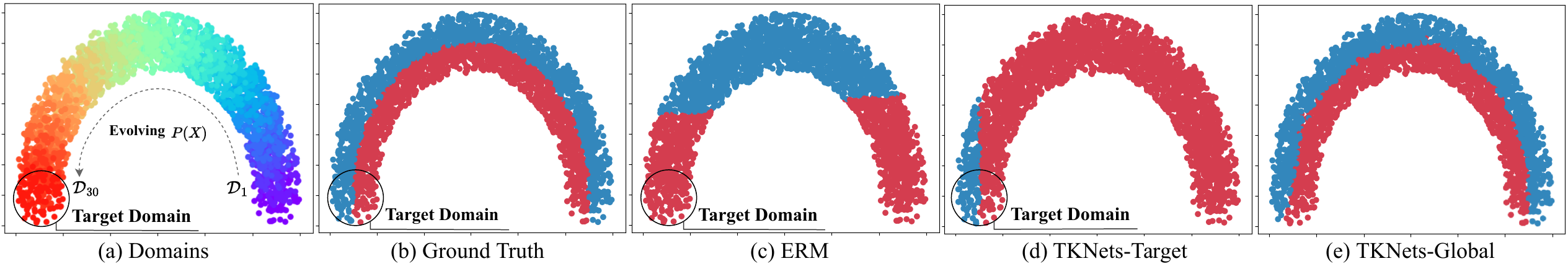}
\caption{Visualization of the EvolCircle dataset. (a) 30 domains indexed by different colors, where the left bottom one is the target domain.  (b) Positive and negative instances are denoted by red and blue dots respectively. (c) The decision boundaries learned by ERM. (d) Decision boundaries of the last model on all domains. (e) Decision boundaries of models in each domain.}
\label{fig:evolcircle}
\end{figure*}

\begin{figure*}[tbp]
		\centering 
		\includegraphics[width=0.88\textwidth]{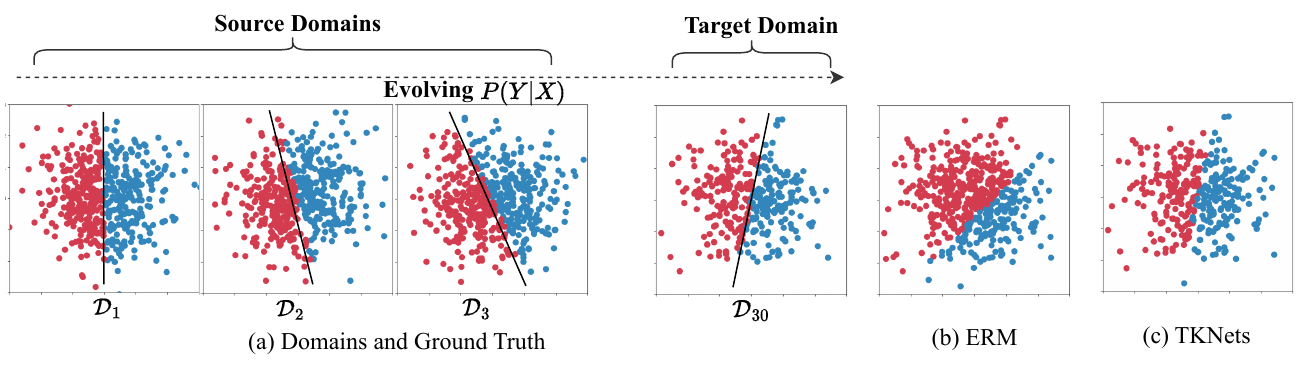}
\caption{Visualization of the RPlate data set. (a) True decision boundaries evolve over domains. (b) \& (c) The decision boundaries learned by ERM and TKNets on the target domain. }
\label{fig:rplate}
\end{figure*}


We compared the proposed method with the following baselines: 
(1) \textbf{GroupDRO}~\cite{sagawa2019distributionally}; 
(2) \textbf{MLDG}~\cite{li2018learning}; (3) \textbf{MMD}~\cite{li2018domain}; 
(4) \textbf{SagNet}~\cite{nam2021reducing}; 
(5) \textbf{VREx}~\cite{krueger2021out}; 
(6) \textbf{SD}~\cite{pezeshki2020gradient}; 
(7) \textbf{IRM}~\cite{wjd_88}; 
(8) \textbf{Mixup}~\cite{yan2020improve}; 
(9) \textbf{CORAL}~\cite{sun2016deep}; 
(10) \textbf{MTL}~\cite{blanchard2021domain}; 
(11) \textbf{RSC}~\cite{huang2020self}; 
(12) \textbf{DIRL}~\cite{dirt}; 
(13) \textbf{ERM}~\cite{vapnik1998Statistical}; 
(14) \textbf{ERM-Near} Only using the most adjacent domains for training; 
(15) \textbf{ERM-W} ERM with weighted loss where the samples from domains closer to the target have higher weights;
(16) \textbf{GI}~\cite{gi};
(17) \textbf{LSSAE}~\cite{lssae};
(18) \textbf{CIDA}~\cite{cida};
(19) \textbf{EAML}~\cite{laed}.
All the baselines and experiments were implemented with DomainBed package \cite{domainbed} under the same setting, which guarantees extensive and sufficient comparisons. 



\begin{table*}[!t]
\caption{Comparison of accuracy (\%) among different methods.  }
\label{tab:all_res}
\begin{center}
\adjustbox{max width=0.77\textwidth}{%
\begin{tabular}{lccccccc}
\toprule
\textbf{Algorithm}         & \textbf{EvolCircle}  & \textbf{RPlate}  & \textbf{RMNIST} & \textbf{Portrait}  & \textbf{Cover Type} & \textbf{FMoW} & \textbf{Average}              \\
\midrule
ERM & 72.7 $\pm$ 1.1   & 63.9 $\pm$ 0.9   & 79.4 $\pm$ 0.0   & 95.8 $\pm$ 0.1   & 71.8 $\pm$ 0.2   & 54.6 $\pm$ 0.1  & 74.7\\
ERM-Near & 90.6 $\pm$ 1.4   & 94.1 $\pm$ 1.4   &  70.2 $\pm$ 1.3   & 80.9 $\pm$ 0.3   & 65.4 $\pm$ 0.2   & 44.2 $\pm$ 0.1  & 74.2\\
ERM-W & 80.4 $\pm$ 1.1   & 73.4 $\pm$ 0.5 & 82.0 $\pm$ 1.1   & 94.9 $\pm$ 0.3   & 71.5 $\pm$ 0.2   & 50.2 $\pm$ 0.1  & 75.4\\
GroupDRO          & 75.5 $\pm$ 1.0   & 70.0 $\pm$ 4.9   & 76.5 $\pm$ 0.2   & 94.8 $\pm$ 0.1   & 66.4 $\pm$ 0.5   & 57.3 $\pm$ 0.1   & 73.4                                  \\
MLDG                  & 91.5 $\pm$ 2.0   & 66.9 $\pm$ 1.8   & 75.0 $\pm$ 0.3   & 66.2 $\pm$ 1.7   & 68.4 $\pm$ 0.7   & 43.8 $\pm$ 0.0   & 68.6                                  \\
MMD                    & 86.7 $\pm$ 5.7   & 59.9 $\pm$ 1.4   & 35.4 $\pm$ 0.0   & 95.4 $\pm$ 0.1   & 69.8 $\pm$ 0.4   & 60.0 $\pm$ 0.0   & 67.8                                 \\
SagNet                  & 78.7 $\pm$ 3.2   & 63.8 $\pm$ 2.9   & 79.4 $\pm$ 0.1   & 95.3 $\pm$ 0.1   & 65.3 $\pm$ 2.2   & 56.2 $\pm$ 0.1  & 73.1                                 \\
VREx                     & 82.9 $\pm$ 6.6   & 61.1 $\pm$ 2.6   & 79.4 $\pm$ 0.1   & 94.3 $\pm$ 0.2   & 66.0 $\pm$ 0.9   & 61.2 $\pm$ 0.0   & 73.3                                 \\
SD                          & 81.7 $\pm$ 4.3   & 65.3 $\pm$ 1.4   & 78.8 $\pm$ 0.1   & 95.1 $\pm$ 0.2   & 69.1 $\pm$ 0.9   & 55.2 $\pm$ 0.0   & 74.2                                  \\
IRM                       & 86.2 $\pm$ 3.0   & 67.2 $\pm$ 2.1   & 47.5 $\pm$ 0.4   & 94.4 $\pm$ 0.3   & 66.0 $\pm$ 1.0   & 58.8 $\pm$ 0.0  & 70.0                           \\
Mixup                   & 91.5 $\pm$ 2.6   & 66.8 $\pm$ 1.8   & 81.3 $\pm$ 0.2   & 96.4 $\pm$ 0.2   & 69.7 $\pm$ 0.6   & 59.5 $\pm$ 0.0   & 77.5                                 \\
CORAL                 & 86.8 $\pm$ 5.1   & 61.9 $\pm$ 1.4   & 78.4 $\pm$ 0.1   & 95.1 $\pm$ 0.1   & 68.1 $\pm$ 1.3   & 56.1 $\pm$ 0.0   & 74.4                                  \\
MTL                      & 77.7 $\pm$ 2.4   & 66.0 $\pm$ 1.2   & 77.2 $\pm$ 0.0   & 95.4 $\pm$ 0.1   & 69.2 $\pm$ 0.9   & 51.7 $\pm$ 0.0   & 72.9                                 \\
RSC                       & 91.5 $\pm$ 2.1   & 67.9 $\pm$ 4.2   & 74.7 $\pm$ 0.1   & 95.5 $\pm$ 0.1   & 69.4 $\pm$ 0.3   & 55.7 $\pm$ 0.1   & 75.8                                 \\
DIRL & 53.3 $\pm$ 0.2   & 56.3 $\pm$ 0.4   & 76.3 $\pm$ 0.3   & 93.2 $\pm$ 0.2   & 61.2 $\pm$ 0.3 & 43.4 $\pm$ 0.3  & 64.0\\

CIDA & 68.0 $\pm$ 2.8   & 91.6 $\pm$ 2.5   & 86.5 $\pm$ 1.2   & 94.5 $\pm$ 1.2   & 70.2 $\pm$ 0.3   & 63.4 $\pm$ 0.2  & 70.9\\
EAML & 90.1 $\pm$ 1.3   & 91.2 $\pm$ 0.3 & 83.1 $\pm$ 0.8   & 95.2 $\pm$ 0.1   & 69.3 $\pm$ 0.1   & 61.1 $\pm$ 0.2  & 81.7\\

LSSAE & 91.5 $\pm$ 2.3  & 91.7 $\pm$ 1.3 & 86.0 $\pm$ 0.2   & 96.0 $\pm$ 0.2   & 71.3 $\pm$ 0.5   & 60.5 $\pm$ 0.1  & 82.8\\
GI & 92.1 $\pm$ 1.7   & 94.3 $\pm$ 0.4 & 86.3 $\pm$ 0.2   & 96.3 $\pm$ 0.1  & 71.7 $\pm$ 0.3   & 64.5 $\pm$ 0.1  & 83.3\\

TKNets (Ours)& \textbf{94.2 $\pm$ 0.9}   & \textbf{95.0 $\pm$ 0.5}   & \textbf{87.5 $\pm$ 0.1}   & \textbf{97.2 $\pm$ 0.0}   & \textbf{73.8 $\pm$ 1.0} & \textbf{66.8 $\pm$ 0.1 } & \textbf{85.7}\\
\bottomrule
\end{tabular}}
\end{center}
\end{table*}

\subsection{Results and Analysis}

\textbf{Overall Evaluation} The performances of our proposed method and baselines are reported in Table \ref{tab:all_res}. It can be observed that TKNets consistently outperforms extensive DG baselines over all the data sets and achieves $85.7\%$ on average which is significantly higher than the other algorithms ($\approx 8\% - 20\%$). The results indicate that existing DG methods cannot deal with evolving domain shifts well while TKNets can properly capture the evolving patterns in the environments. 
ERM-Near is a simple solution for TDG as the most recent domain shift least with respect to the target.
It performs well on synthetic data as the samples of one domain are enough for training a classifier while the low sample utilization makes it hard to handle real datasets. Section~\ref{appendix:domain_info} in the appendix presents an experiment on ERM with domain index, which shows that directly incorporating it does not improve learning in the TDG problem.
We also implement CIDA \cite{cida}, as a competitive TDA method. 
With the fair random hyper-parameter search strategy, the accuracy of TKNets is $14.5\%$ higher on average. The results illustrate the superiority of TKNets.
\begin{table*}[ht]
\caption{Comparison of accuracy (\%) of different methods on RMNIST data set with different number of domains.}
\label{tab:domain_num_RMNIST_table}
\begin{center}
\adjustbox{max width=0.74\textwidth}{%
\begin{tabular}{lccccccccc}
\toprule
\multicolumn{1}{c}{\# Domains} & 7                       & 9                       & 11                      & 13                      & 15                      & 17 &19                      \\
\midrule
Mixup                             & \textbf{83.8 $\pm$ 0.6}          & \textbf{83.3 $\pm$ 0.3}          & 80.0 $\pm$ 0.3          & 78.3 $\pm$ 0.3          & 80.0 $\pm$ 0.3          & 77.3 $\pm$ 0.3    & 72.5 $\pm$ 0.3        \\
IRM                             & 35.6 $\pm$ 0.3          & 46.5 $\pm$ 0.3          & 40.4 $\pm$ 0.4          & 49.6 $\pm$ 0.3          & 46.0 $\pm$ 0.1          & 46.9 $\pm$ 0.3   & 41.3 $\pm$ 0.1         \\
MLDG                             &82.7 $\pm$ 0.3          & 80.0 $\pm$ 0.6          & 79.0 $\pm$ 0.1          & 74.2 $\pm$ 0.1          & 77.9 $\pm$ 0.3          & 71.7 $\pm$ 0.1  & 68.8 $\pm$ 0.3          \\
ERM                            & 81.3 $\pm$ 0.1          & 79.7 $\pm$ 0.2          & 79.7 $\pm$ 0.3          & 75.6 $\pm$ 0.3          & 77.8 $\pm$ 0.3          & 69.1 $\pm$ 0.3   & 74.4 $\pm$ 0.1         \\
TKNets (Ours)                             & 81.1 $\pm$ 0.1 & 82.8 $\pm$ 0.3 & \textbf{88.1 $\pm$ 0.3} & \textbf{87.3 $\pm$ 0.5} & \textbf{86.6 $\pm$ 0.1} & \textbf{85.6 $\pm$ 0.3} & \textbf{86.3 $\pm$ 0.3}\\
\bottomrule
\end{tabular}
}
\end{center}
\end{table*}

\begin{table*}[htbp]
\caption{Comparison of accuracy (\%) of different methods on RMNIST data set with different distance between domains.  }
\label{tab:domain_dis_RMNIST_table}
\begin{center}
\adjustbox{max width=0.7\textwidth}{%
\begin{tabular}{lcccccc}
\toprule
\multicolumn{1}{c}{Domain Distance}& $3^\circ$                       & $5^\circ$                       & $7^\circ$                       & $10^\circ$                      & $15^\circ$                      & $20^\circ$                      \\
\midrule
Mixup                               & 92.5 $\pm$ 0.1          & 91.9 $\pm$ 0.3          & 88.4 $\pm$ 0.3          & 81.3 $\pm$ 0.2          & 73.1 $\pm$ 0.1          & 59.4 $\pm$ 0.3          \\
IRM                                     & 69.4 $\pm$ 0.2          & 63.4 $\pm$ 0.1          & 49.7 $\pm$ 0.0          & 47.5 $\pm$ 0.4          & 35.6 $\pm$ 0.3          & 31.3 $\pm$ 0.3          \\
MLDG                                & 90.9 $\pm$ 0.1          & 87.5 $\pm$ 0.2          & 85.0 $\pm$ 0.2          & 75.0 $\pm$ 0.3          & 71.9 $\pm$ 0.1          & 56.9 $\pm$ 0.3          \\
ERM                                    & 92.2 $\pm$ 0.1          & 88.8 $\pm$ 0.0          & 82.8 $\pm$ 0.1          & 79.4 $\pm$ 0.0          & 66.3 $\pm$ 0.1          & 53.1 $\pm$ 0.3          \\
TKNets (Ours)                             & \textbf{94.7 $\pm$ 0.3} & \textbf{93.1 $\pm$ 0.3} & \textbf{90.8 $\pm$ 0.3} & \textbf{88.3 $\pm$ 0.1} & \textbf{86.4 $\pm$ 0.3} & \textbf{84.1 $\pm$ 0.3}\\
\bottomrule
\end{tabular}
}
\end{center}
\end{table*}

\begin{table}[t]
    \caption{Performance on multiple target domains.}
    \label{tab:multiple_step}
    \begin{center}
    
    \adjustbox{max width=0.46\textwidth}{%
    \begin{tabular}{lcccc}
    \toprule
    \textbf{Domain}         & \textbf{T+1}  & \textbf{T+2} & \textbf{T+3} & \textbf{T+4}             \\
    \midrule
        ERM & 79.4 $\pm$ 0.0 & 65.0 $\pm$ 1.1 & 50.7 $\pm$ 1.2 & 41.0 $\pm$ 1.7 \\ 
        GI & 86.3 $\pm$ 0.2 & 65.9 $\pm$ 3.1 & 48.5 $\pm$ 1.2 & 41.1 $\pm$ 1.1 \\ 
        EAML & 83.1 $\pm$ 0.8 & 67.5 $\pm$ 1.5 & 55.6 $\pm$ 1.7 & 44.1 $\pm$ 1.6 \\ 
        TKNets & \textbf{87.5 $\pm$ 0.1} & \textbf{83.1 $\pm$ 0.8} & \textbf{72.9 $\pm$ 2.4} & \textbf{72.2 $\pm$ 1.1} \\ 
    \bottomrule
    \end{tabular}}
    \end{center}
\end{table}

\noindent\textbf{Evolving $P(X)$ (EvolCircle).} 
The decision boundaries on the unseen target domain $\mathcal{D}_{30}$ learned by ERM and TKNets are shown in Fig.~\ref{fig:evolcircle}(c) and Fig.~\ref{fig:evolcircle}(d) respectively. 
We can observe that TKNet fits the ground truth significantly better than ERM. This indicates that our approach can capture the evolving pattern of $P(X)$ according to source domains and then learn a better classifier for the target domain. 
Furthermore, we can observe that the decision boundary learned by ERM achieves better performance on the observed source domains. This is because it focuses on improving generalization ability on all source domains, which leads the poor performance on the outer target domain $\mathcal D_{30}$. Instead, TKNets can ``foresee" the centroids for the target domain, which guarantees a good generalization performance, though it may not perform well on source domains. 


\noindent\textbf{Evolving $P(Y|X)$ or $P(X|Y)$ (RPlate).} 
By visualizing the datasets, we can observe that the predicted boundary of TKNets better approximates the ground truth, compared with the result of ERM. This indicates that our approach can also capture the $P(Y|X)$ evolving pattern. Existing DG methods perform poorly on this dataset because the ground truth labeling function varies. Under the evolving labeling functions, even the same instance can have different labels in different domains. Thus, there does not exist a single model that can perform well across all the domains. For this situation, learning a model specifically for one domain instead of all domains can be a possible solution. TKNets can capture the evolving pattern and produce a  model specifically for the target domain.

\noindent\textbf{When to apply TKNets?} Existing DG methods assume that the distances between observed and unseen domains are not very large. However, the dissimilarity between domains is a crucial factor that can fundamentally influence the possibility and performance of generalization. To investigate the impact of variances of the environment, we create a series of variations on the raw RMNST data by jointly varying the number of domains (Table \ref{tab:domain_num_RMNIST_table}) and the degree interval (Table \ref{tab:domain_dis_RMNIST_table}) between two consecutive domains. 
On the one hand, the greater number of domains and larger distance between them lead to more significant differences across domains. 
This makes traditional DG methods harder to train one model from all domains, but instead more domains benefit our TKNets to learn the evolving pattern to achieve better performance. On the other hand, we observe TKNets significantly outperform other baselines when the number of domains and the distance between domains increase. 

In Table \ref{tab:domain_num_RMNIST_table}, the performance of traditional DG methods fluctuate when the number of domains increases. TKNets' performance continuously improves when the domain number increases since it easily learns the evolving manners from more domains. Please refer to Section \ref{appendix:inter_extra} for more discussion about this.
From Table \ref{tab:domain_dis_RMNIST_table}, when the domain distance increases, the performance of DG methods decreases severely while the performance of TKNets drops slightly. 


\noindent\textbf{Domain Generalization across Temporal Domains.}
There are many scenarios in the real world where it is necessary for us to have the model applied to the next several domains instead of one. In this section, to test the performance of the algorithm on more distant domains, we extend the RMNIST \cite{wjd_59} to 15 domains, with the last four domains as target domains. From the Table~\ref{tab:multiple_step}, as expected, the model performance drops heavily as the index increases due to the domain shifting. However, the accuracy of our algorithm decreases very slowly compared to other algorithms. Even on $T+4$ domains, the accuracy still remains at 72.2\%. While the accuracy of other algorithms has dropped to $41.0\% \sim 44.1\%$. The result demonstrates the generalization performance of our algorithm on multiple domains.




\section{Conclusions}
In this paper, we study the problem of domain generalization in an evolving environment and propose temporal domain generalization (TDG)  as a general framework to address it. Our theoretical analysis highlights the role of learning a Koopman operator to capture the evolving pattern over domains. Motivated by our theory, we propose temporal Koopman networks (TKNets), a simple and efficient algorithm for TDG. Experiments on both synthetic and real-world datasets validate the effectiveness of our method. 

\section*{Acknowledgements}
We appreciate constructive feedback from anonymous reviewers and meta-reviewers. This work is supported by the Natural Sciences and Engineering Research Council of Canada (NSERC), Discovery Grants program.

\small{\bibliography{example_paper}}
\clearpage

\appendix
\onecolumn

\section{Theories}
\label{appendix_theory}

We first prove an intermediate lemma:

\begin{lemma}\label{lemma1}
Let $z\in\mathcal{Z}=\mathcal{X}\times\mathcal{Y}$ be the real-valued integrable random variable, let $P$ and $Q$ be two distributions on a common space $\mathcal{Z}$ such that $Q$ is absolutely continuous w.r.t. $P$. If for any function $f$ and $\lambda\in\R$ such that $\mathbb{E}_P[e^{\lambda(f(z)-\E_P(f(z))}]<\infty$, then we have:
\begin{equation*}
  \lambda (\E_{Q} f(z) - \E_{P}f(z)) \leq D_{\text{KL}}(Q\|P) + \log \E_P[e^{\lambda(f(z)-\E_P(f(z))}],
\end{equation*}
where $D_{\text{KL}}(Q\|P)$ is the Kullback–Leibler divergence between distribution $Q$ and $P$, and the equality arrives when $f(z)= \E_{P} f(z) + \frac{1}{\lambda}\log(\frac{d Q}{d P})$.
\end{lemma}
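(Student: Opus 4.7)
The plan is to recognize the inequality as a form of the Donsker--Varadhan variational representation of KL divergence (equivalently, the Gibbs variational principle), and to prove it by constructing an auxiliary measure through an exponential tilt of $P$, then invoking the non-negativity of KL divergence.

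First I would reduce to a cleaner formulation. Let $g(z) := \lambda\bigl(f(z) - \mathbb{E}_P f(z)\bigr)$, so the claimed inequality becomes
\begin{equation*}
    \mathbb{E}_Q[g(z)] - \log \mathbb{E}_P[e^{g(z)}] \;\le\; D_{\text{KL}}(Q\|P).
\end{equation*}
The hypothesis $\mathbb{E}_P[e^{g(z)}] < \infty$ guarantees that the exponentially tilted measure below is well defined.

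Next I would introduce the Gibbs measure $R$ on $\mathcal{Z}$ via the Radon--Nikodym derivative
\begin{equation*}
    \frac{dR}{dP}(z) \;=\; \frac{e^{g(z)}}{\mathbb{E}_P[e^{g(z)}]},
\end{equation*}
which is a valid probability measure since the normalizing constant is finite. Because $Q \ll P$ and $dR/dP > 0$, we also have $Q \ll R$. Then I would compute
\begin{equation*}
    D_{\text{KL}}(Q\|R) \;=\; \mathbb{E}_Q\!\left[\log \frac{dQ}{dP}\right] - \mathbb{E}_Q\!\left[\log \frac{dR}{dP}\right] \;=\; D_{\text{KL}}(Q\|P) - \mathbb{E}_Q[g(z)] + \log \mathbb{E}_P[e^{g(z)}].
\end{equation*}
Since $D_{\text{KL}}(Q\|R) \ge 0$ (Gibbs' inequality), rearranging yields precisely the desired bound, and unwinding $g$ back to $\lambda(f - \mathbb{E}_P f)$ (noting the constant $\lambda \mathbb{E}_P f$ cancels on both sides) recovers the lemma exactly.

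For the equality case, I would observe that $D_{\text{KL}}(Q\|R) = 0$ iff $Q = R$, i.e.\ iff $dQ/dP = e^{g}/\mathbb{E}_P[e^{g}]$. Taking logarithms and solving for $f$ gives $f(z) = \mathbb{E}_P f(z) + \tfrac{1}{\lambda}\log\bigl(dQ/dP\bigr) + \tfrac{1}{\lambda}\log \mathbb{E}_P[e^{g(z)}]$; since the additive constant does not affect the quantity $f - \mathbb{E}_P f$ appearing in the bound, the stated choice $f(z) = \mathbb{E}_P f(z) + \tfrac{1}{\lambda}\log(dQ/dP)$ achieves equality. The only minor subtlety I anticipate is the bookkeeping around the additive constant and verifying $Q \ll R$ so that $D_{\text{KL}}(Q\|R)$ is well defined; both are routine given the absolute continuity and integrability hypotheses, so I do not expect a genuine obstacle.
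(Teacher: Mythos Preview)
Your proposal is correct and follows essentially the same route as the paper: both introduce the exponentially tilted (Gibbs) measure $dR/dP \propto e^{g}$ and reduce the inequality to the Donsker--Varadhan bound $\mathbb{E}_Q[g] - \log \mathbb{E}_P[e^g] \le D_{\text{KL}}(Q\|P)$. The only cosmetic difference is that you invoke $D_{\text{KL}}(Q\|R)\ge 0$ directly, whereas the paper writes out the same quantity as $-\mathbb{E}_Q[\log(Z_g\, p/q)]$ and bounds it via Jensen's inequality; these are identical steps, and your handling of the equality case (including the additive-constant remark) matches the paper's.
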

\begin{proof}
We let $g$ be {any} function such that $\E_P[e^{g(z)}]<\infty$, then we define a random variable $Z_g(z) = \frac{e^{g(z)}}{\E_P[e^{g(z)}]}$, then we can verify that $\E_{P}(Z_g) =1$. We assume another distribution $Q$ such that $Q$ (with distribution density $q(z)$) is absolutely continuous w.r.t. $P$ (with distribution density $p(z)$), then we have:
\begin{equation*}
\begin{split}
     \E_{Q}[\log Z_g] & = \E_{Q}[\log\frac{q(z)}{p(z)} + \log(Z_g\frac{p(z)}{q(z)})]  = D_{\text{KL}}(Q\|P) + \E_{Q}[\log(Z_g\frac{p(z)}{q(z)})]\\
     & \leq D_{\text{KL}}(Q\|P) + \log\E_{Q}[\frac{p(z)}{q(z)}Z_g]= D_{\text{KL}}(Q\|P) + \log \E_{P}[Z_g]
\end{split}
\end{equation*}
Since $\E_{P}[Z_g] = 1$ and according to the definition we have $\E_{Q}[\log Z_g] = \E_{Q}[g(z)] - \E_{Q}\log\E_{P}[e^{g(z)}] = \E_{Q}[g(z)] - \log\E_{P}[e^{g(z)}]$ (since $\E_{P}[e^{g(z)}]$ is a constant w.r.t. $Q$) and we therefore have:
\begin{equation}
    \E_{Q}[g(z)] \leq  \log\E_{P}[e^{g(z)}] + D_{\text{KL}}(Q\|P)
    \label{change_of_measure}
\end{equation}
Since this inequality holds for any function $g$ with finite moment generation function, then we let $g(z) = \lambda( f(z)-\E_P f(z))$ such that $\E_P[e^{f(z)-\E_P f(z)}]<\infty$. Therefore we have $\forall \lambda$ and $f$ we have:
\begin{equation*}
    \E_{Q}\lambda(f(z)-\E_P f(z)) \leq D_{\text{KL}}(Q\|P) + \log \E_P[e^{\lambda(f(z)-\E_P f(z)}]
\end{equation*}
Since we have $\E_{Q}\lambda(f(z)-\E_P  f(z)) = \lambda \E_{Q} (f(z)-\E_P f(z))) = \lambda (\E_{Q} f(z) - \E_{P} f(z))$, therefore we have:
\begin{equation*}
  \lambda (\E_{Q} f(z) - \E_{P} f(z)) \leq D_{\text{KL}}(Q\|P) + \log \E_P[e^{\lambda (\E_{Q} f(z) - \E_{P} f(z))}]
\end{equation*}
As for the attainment in the equality of Eq.~(\ref{change_of_measure}), we can simply set  $g(z) = \log(\frac{q(z)}{p(z)})$, then we can compute $\E_{P}[e^{g(z)}]=1$ and the equality arrives. Therefore in Lemma 1, the equality reaches when $\lambda(f(z)- \E_{P} f(z)) = \log(\frac{d Q}{d P})$.
\end{proof}
In the classification problem, we define the observation pair $z=(x,y)$. We also define the loss function $\ell(z)=L\circ h(z)$ with deterministic hypothesis $h$ and prediction loss function $L$. Then for abuse of notation, we simply denote the loss function $\ell(z)$ in this part.

Then we introduce the following bound between forecasted domain $\mathcal D_t^\mathcal{K}$ and real domain $\mathcal D_t$.
\begin{lemma}
\cite{shui2022novel}\label{lemmastart}
Let ${\mathcal D^\mathcal{K}_t}(h) = \mathcal{K}\circ \mathcal{G}\circ\phi(\mathcal{D}_m)$ be the \emph{forecasted} target domain, and suppose the loss function $\ell$  is bounded within an interval $G:G=\max(\ell)-\min(\ell)$. Then, for any $h \in \mathcal{H}$, its target risk $R_{\mathcal{D}_t}(h)$ can be upper bounded by:
\begin{align*}
    R_{\mathcal D_t}(h)\leq R_{\mathcal D^g_t}(h) +\frac{G}{\sqrt{2}}\sqrt{d_{KL}(\mathcal D_t||\mathcal D^\mathcal{K}_t)}.
\end{align*}
where we use $d_{}(\cdot||\cdot)$ to denote the KL divergence for simplification in the remaining paragraphs.
\end{lemma}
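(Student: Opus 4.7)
The plan is to derive this lemma directly from the change-of-measure inequality established as Lemma~1 in the appendix, combined with a sub-Gaussian moment-generating-function bound (Hoeffding's lemma) that exploits the boundedness of the loss, and then to optimize over the free parameter $\lambda$ that Lemma~1 leaves unspecified.

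First, I would instantiate Lemma~1 with $P = \mathcal{D}_t^{\mathcal{K}}$, $Q = \mathcal{D}_t$, and $f(z) = \ell(z) = L\circ h(z)$. Assuming $\mathcal{D}_t$ is absolutely continuous with respect to $\mathcal{D}_t^{\mathcal{K}}$ (otherwise $d_{KL}(\mathcal{D}_t \| \mathcal{D}_t^{\mathcal{K}}) = \infty$ and the bound is trivial), Lemma~1 yields, for every $\lambda > 0$,
\[
\lambda \bigl(R_{\mathcal{D}_t}(h) - R_{\mathcal{D}_t^{\mathcal{K}}}(h)\bigr) \leq d_{KL}(\mathcal{D}_t \| \mathcal{D}_t^{\mathcal{K}}) + \log \E_{P}\bigl[e^{\lambda(\ell(z) - \E_P \ell(z))}\bigr].
\]
Next, I would bound the log-MGF term. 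Because $\ell$ takes values in an interval of length $G$, the centered variable $\ell(z) - \E_P \ell(z)$ is mean-zero and supported in an interval of length at most $G$, so Hoeffding's lemma gives $\log \E_P[e^{\lambda(\ell - \E_P \ell)}] \leq \lambda^2 G^2/8$.

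Substituting this into the change-of-measure inequality and dividing through by $\lambda > 0$ leaves $R_{\mathcal{D}_t}(h) - R_{\mathcal{D}_t^{\mathcal{K}}}(h) \leq d_{KL}/\lambda + \lambda G^2/8$, valid for every $\lambda > 0$. The remaining step is a one-variable minimization of the right-hand side in $\lambda$: the first-order condition gives $\lambda^\star = (2/G)\sqrt{2\,d_{KL}}$, and substituting back produces exactly $(G/\sqrt{2})\sqrt{d_{KL}(\mathcal{D}_t \| \mathcal{D}_t^{\mathcal{K}})}$, matching the claim.

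The main obstacle is not conceptual but careful bookkeeping. I need to (i) check the hypotheses of Lemma~1, in particular finiteness of the MGF, which is immediate from boundedness; (ii) apply Hoeffding's lemma to the \emph{centered} loss rather than $\ell$ itself, so that the constant $G^2/8$ appears rather than the cruder $G^2/2$; and (iii) track constants through the $\lambda$-optimization so that the factor $1/\sqrt{2}$ emerges correctly rather than, say, $1/2$ or $1/(2\sqrt{2})$. No finite-sample or empirical-process machinery is required, since the statement concerns only population-level risks and distributional distances.
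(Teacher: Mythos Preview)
Your proposal is correct and follows essentially the same route as the paper: invoke Lemma~1 with $Q=\mathcal{D}_t$, $P=\mathcal{D}_t^{\mathcal{K}}$, $f=\ell$, bound the log-MGF via the sub-Gaussian property of a bounded loss (i.e., Hoeffding's lemma giving $\lambda^2 G^2/8$), and then optimize over $\lambda>0$ with $\lambda^\star = (2\sqrt{2}/G)\sqrt{d_{KL}}$ to obtain the $(G/\sqrt{2})\sqrt{d_{KL}}$ term. The paper's proof is identical in structure and constants; your bookkeeping remarks (i)--(iii) are exactly the points the paper handles.
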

\begin{remark}
    {To achieve a low risk on $\mathcal{D}_t$, Lemma~\ref{lemmastart} suggests (1) learning $h$ and $g$ to minimize the risk over the forecasted domain $\mathcal{D}_t^\mathcal{K}$ and (2) learning $\mathcal{K}$ to minimize the KL divergence between $\mathcal{D}_t^\mathcal{K}$  and $\mathcal{D}_t$. While in practice $R_{\mathcal D^\mathcal{K}_t}(h)$ can be approximated by the empirical risk, Lemma~\ref{lemmastart}  still cannot provide any practical guidelines for learning $\mathcal{K}$ since $\mathcal{D}_t$ is unavailable. Moreover, note that $\mathcal{D}_t^\mathcal{K}$  can be replaced by $g(\mathcal{D}_i)$ for any other source domain $i$ in $\mathcal{E}$ and the bound still holds. Thus, Lemma~\ref{lemmastart} does not provide any theoretical insight into how to discover and leverage the evolving pattern in $\mathcal{E}$. }
\end{remark}
\begin{proof}
According to Lemma~\ref{lemma1}, $\forall \lambda>0$ we have: 
\begin{equation}
    \E_Q f(z) - \E_P f(z) \leq \frac{1}{\lambda} (\log\E_{P}~e^{[\lambda(f(z)-\E_{P}f(z))]} + D_{\text{KL}}(Q\|P))
    \label{sub_1}
\end{equation}

\noindent And $\forall \lambda<0$ we have:
\begin{equation}
    \E_Q f(z) - \E_P f(z) \geq \frac{1}{\lambda} (\log\E_{P}~e^{[\lambda(f(z)-\E_{P}f(z))]} + D_{\text{KL}}(Q\|P))
    \label{sub_2}
\end{equation}

Let $f=\ell$. Since the random variable $\ell$ is bounded through $G =  \max(\ell) -\min(\ell)$, then according to \cite{wainwright2019high} (Chapter 2.1.2), $\ell-\E_{P}\ell$ is sub-Gaussian with parameter at most $\sigma = \frac{G}{2}$, then we can apply Sub-Gaussian property to bound the $\log$ moment generation function:
\begin{equation*}
    \log\E_{P}~e^{[\lambda(\ell(z)-\E_{P}\ell(z))]} \leq \log e^{\frac{\lambda^2\sigma^2}{2}} \leq \frac{\lambda^2G^2}{8}.
\end{equation*}

\noindent In Eq.(\ref{sub_1}), we let $Q = \mathcal{D}'$ and $P=\mathcal{D}$, then $\forall \lambda>0$ we have:
\begin{equation}
    \E_{\mathcal{D}'}~\ell(z) - \E_{\mathcal{D}}~\ell(z) \leq \frac{G^2\lambda}{8} +  \frac{1}{\lambda}D_{\text{KL}}(\mathcal{D}'\|\mathcal{D})
    \label{sub_3}
\end{equation}


Since the inequality holds for $\forall \lambda$, then by taking $\lambda = \frac{2\sqrt{2}}{G}\sqrt{ D_{\text{KL}}(\mathcal{D}'\|\mathcal{D})}$ we finally have:
\begin{equation}
      \E_{\mathcal{D}'}~\ell(z)  \leq \E_{\mathcal{D}}~\ell(z) + \frac{G}{\sqrt{2}}\sqrt{D_{\text{KL}}(\mathcal{D}'\|\mathcal{D})}
    \label{sub_6}
\end{equation}
Let $\mathcal{D}' = \mathcal{D}_t$ and $\mathcal{D} = \mathcal{D}_t^\mathcal{K}$, we complete our proof.
\end{proof}

Given the definition of consistency, we can bound the target risk in terms of {\small $d_{}(\mathcal D^g_i||\mathcal D_i)$} in the source domains.

\begin{theorem}
\textbf{(Restatement of Theorem~\ref{theoremds})} Let $\{\mathcal D_1,\mathcal D_2,...,\mathcal D_m\}$ be $m$ observed source domains sampled sequentially from an evolving environment $\mathcal{E}$, and $\mathcal{D}_t$ be the next unseen target domain: $\mathcal{D}_t = \mathcal{D}_{m+1}$. Then, if $\mathcal{E}$ is $\lambda$-consistent, we have
{\small
\begin{align*}
  &R_{\mathcal{D}_t}(h) \le R_{\mathcal{D}_t^{\mathcal{K}^*}}(h)+\frac{G}{\sqrt{2(m-1)}}\Bigg(\sqrt{\sum _{i=2}^{m}d_{}(\mathcal D_{i}||\mathcal D_{i}^{\mathcal{K}^*})}+\sqrt{(m-1)\lambda}\Bigg).
\end{align*}
}
\end{theorem}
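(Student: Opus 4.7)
The plan is to combine the change-of-measure bound in Lemma~\ref{lemmastart} with the $\lambda$-consistency condition of Definition~\ref{def1}. Lemma~\ref{lemmastart} already handles the ``vertical'' step from the true target $\mathcal{D}_t$ to the forecasted target $\mathcal{D}_t^{\mathcal{K}^*}$: it yields
\begin{equation*}
R_{\mathcal D_t}(h) \;\le\; R_{\mathcal D_t^{\mathcal{K}^*}}(h) + \frac{G}{\sqrt{2}}\sqrt{d(\mathcal D_t\|\mathcal D_t^{\mathcal{K}^*})}.
\end{equation*}
So the entire task reduces to controlling the unobservable quantity $d(\mathcal D_t\|\mathcal D_t^{\mathcal{K}^*})$ by the observable KL-errors $d(\mathcal D_i\|\mathcal D_i^{\mathcal{K}^*})$ on the source domains, plus a $\lambda$-dependent slack.

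Next I would invoke $\lambda$-consistency, which gives, for every source index $i\in\{2,\dots,m\}$,
\begin{equation*}
d(\mathcal D_t\|\mathcal D_t^{\mathcal{K}^*}) \;\le\; d(\mathcal D_i\|\mathcal D_i^{\mathcal{K}^*}) + \lambda.
\end{equation*}
Summing these $m-1$ inequalities and dividing by $m-1$ converts the pointwise bound into an average-based one:
\begin{equation*}
d(\mathcal D_t\|\mathcal D_t^{\mathcal{K}^*}) \;\le\; \frac{1}{m-1}\sum_{i=2}^{m} d(\mathcal D_i\|\mathcal D_i^{\mathcal{K}^*}) + \lambda.
\end{equation*}
This ``averaging'' trick is the conceptual heart of the argument: it is what makes use of \emph{all} source domains rather than any single one of them, and thus introduces the favorable $1/\sqrt{m-1}$ rate that appears in the final bound.

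The remaining work is purely algebraic. Taking square roots and using the subadditivity $\sqrt{a+b}\le \sqrt{a}+\sqrt{b}$ on the right-hand side gives
\begin{equation*}
\sqrt{d(\mathcal D_t\|\mathcal D_t^{\mathcal{K}^*})} \;\le\; \sqrt{\frac{1}{m-1}\sum_{i=2}^{m} d(\mathcal D_i\|\mathcal D_i^{\mathcal{K}^*})} + \sqrt{\lambda}.
\end{equation*}
Multiplying by $G/\sqrt{2}$, noting that $\tfrac{G}{\sqrt 2}\sqrt{\lambda}=\tfrac{G}{\sqrt{2(m-1)}}\sqrt{(m-1)\lambda}$, and substituting back into the inequality from Lemma~\ref{lemmastart} produces exactly the claimed bound.

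I do not expect any real obstacle here; the proof is a short chain consisting of (i)~one appeal to the already-proved Lemma~\ref{lemmastart}, (ii)~one application of Definition~\ref{def1}, and (iii)~the averaging plus $\sqrt{a+b}\le\sqrt a+\sqrt b$ step. The only point that deserves minor care is checking that the final factoring of $\tfrac{G}{\sqrt{2(m-1)}}$ matches the statement, which is immediate after rewriting $\sqrt\lambda = \sqrt{(m-1)\lambda}/\sqrt{m-1}$. No tightness or concentration argument is needed because $\lambda$-consistency is taken as a property of the environment rather than an assumption to be instantiated with finite samples.
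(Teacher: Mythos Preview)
Your proposal is correct and follows essentially the same route as the paper's proof: apply Lemma~\ref{lemmastart}, use $\lambda$-consistency to bound $d(\mathcal D_t\|\mathcal D_t^{\mathcal{K}^*})$ by each $d(\mathcal D_i\|\mathcal D_i^{\mathcal{K}^*})+\lambda$, average over $i=2,\dots,m$, and finish with $\sqrt{a+b}\le\sqrt a+\sqrt b$ together with the $\tfrac{1}{\sqrt{m-1}}$ factoring. The only cosmetic difference is ordering---the paper does the averaging before invoking the lemma, whereas you invoke the lemma first---but the argument is the same.
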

\begin{remark}\label{rm2}
{{\bf (1)}~Theorem~\ref{theoremds} highlights the role of the mapping function and $\lambda$-consistency in TDG. Given $\mathcal{K}^*$, the target risk $R_{\mathcal{D}_t}(h)$ can be upper bounded by in terms of loss on the forecasted target domain $R_{\mathcal{D}_t^{\mathcal{K}^*}}(h)$, $\lambda$, and the KL divergence between $\mathcal{D}_i$ and $\mathcal{D}_i^{\mathcal{K}^*}$ in all \emph{observed} source domains. When $\mathcal{K}^*$ can properly capture the evolving pattern of $\mathcal{E}$, we can train the classifier $h$ over the forecasted domain $\mathcal{D}_t^{\mathcal{K}^*}$ generated from $\mathcal{D}_m$ and can still expect a low risk on $\mathcal{D}_t$. {\bf(2)}~$\lambda$ is \emph{unobservable} and is determined by $\mathcal{E}$ and $\mathcal{K}$. Intuitively, a small $\lambda$ suggests high \emph{predictability} of $\mathcal{E}$, which indicates that it is easier to predict the target domain $\mathcal{D}_t$. On the other hand, a large $\lambda$ indicates that there does not exist a global mapping function that captures the evolving pattern consistently well over domains. Consequently, generalization to the target domain could be challenging and we cannot expect to learn a good hypothesis $h$ on $\mathcal{D}_t$. {\bf (3)}~In practice, $\mathcal{K}^*$ is not given, but can be learned by minimizing $d_{}(\mathcal D_{i}||\mathcal D_{i}^{\mathcal{K}})$ in source domains. Besides, aligning $D_{i}^{\mathcal{K}}$ and $\mathcal{D}_i$ is usually achieved by} representation learning: \emph{that is, learning $\mathcal{G}\circ\phi: \mathcal{X} \rightarrow \mathcal{Z}$ to minimize $d_{}(\mathcal D_{i}||\mathcal D_{i}^{\mathcal{K}}), \forall z \in \mathcal{Z}$.}
\end{remark}
\begin{proof}

According to Definition of $\lambda$-consistency, we have:

\begin{align*}
d_{}(\mathcal D_t||\mathcal D^{\mathcal{K}^*}_t)\leq d_{}(\mathcal D_2||\mathcal D^{\mathcal{K}^*}_2)+|d_{}(\mathcal D_t||\mathcal D^{\mathcal{K}^*}_t)-d_{}(\mathcal D_2||\mathcal D_2^{\mathcal{K}^*})|\leq d_{}(\mathcal D_2||\mathcal D_2^{\mathcal{K}^*})+\lambda
\end{align*}

Similarly, we have the followings:
\begin{align*}
&d_{}(\mathcal D_t||\mathcal D_t^{\mathcal{K}^*})\leq d_{}(\mathcal D_i||\mathcal D^{\mathcal{K}^*}_i)+|d_{}(\mathcal D_t||\mathcal D^{\mathcal{K}^*}_t)-d_{}(\mathcal D_i||\mathcal D_i^{\mathcal{K}^*})|\leq d_{}(\mathcal D_i||\mathcal D_i^{\mathcal{K}^*})+\lambda \\
& \hspace{180pt} \cdots\\
&d_{}(\mathcal D_t||\mathcal D_t^{\mathcal{K}^*})\leq d_{}(\mathcal D_m||\mathcal D^{\mathcal{K}^*}_m)+|d_{}(\mathcal D_t||\mathcal D_t^{\mathcal{K}^*})-d_{}(\mathcal D_m||\mathcal D_m^{\mathcal{K}^*})|\leq d_{}(\mathcal D_m||\mathcal D_m^{\mathcal{K}^*})+\lambda,
\end{align*}
which gives us
\begin{align*}
d_{}(\mathcal D_t||\mathcal D_t^{\mathcal{K}^*})\leq \frac{1}{m-1}\sum _{i=2}^{m}d_{}(\mathcal D_i||\mathcal D_{i}^{\mathcal{K}^*})+\lambda
\end{align*}
Then, according to Lemma~\ref{lemmastart}, we have
\begin{align*}
R_{\mathcal D_{t}}(h)&\leq R_{\mathcal D_t ^{\mathcal{K}^*}}(h)+\frac{G}{\sqrt{2}}\sqrt{d_{}(\mathcal D_t||\mathcal D^{\mathcal{K}^*}_t)}\\
&\leq R_{\mathcal D_t ^{\mathcal{K}^*}}(h)+\frac{G}{\sqrt{2}}\sqrt{\frac{1}{m-1}\sum _{i=2}^{m}d_{}(\mathcal D_i||\mathcal D_{i}^{\mathcal{K}^*})+\lambda}\\
& \le R_{\mathcal{D}_t^{\mathcal{K}^*}}(h)+\frac{G}{\sqrt{2(m-1)}}\Bigg(\sqrt{\sum _{i=2}^{m}d_{}(\mathcal D_{i}||\mathcal D_{i}^{\mathcal{K}^*})}+\sqrt{(m-1)\lambda}\Bigg)
\end{align*}
\end{proof}




We first introduce the upper bound for Kullback–Leibler (KL) Divergence decomposition:
\begin{lemma}
\label{jsd_decompose}
Let $\mathcal D(z,y)$ and $\mathcal D'(z,y)$ be two distributions over $\mathcal X \times \mathcal Y$, $\mathcal D(y)$ and $\mathcal D'(y)$ be the corresponding marginal distribution of $y$, $\mathcal D(z|y)$ and $\mathcal D'(z|y)$ be the corresponding conditional distribution given $y$, then we can get the following bound,
\begin{align*}  
d_{}(\mathcal D(z,y)||\mathcal D'(z,y))&=d_{}(\mathcal D(y)||\mathcal D'(y))+\mathbb E_{y\sim \mathcal D(y)} d_{}(\mathcal D(z|y)||\mathcal D'(z|y))
\end{align*}
\end{lemma}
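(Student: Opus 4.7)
The plan is to prove this by direct computation from the definition of KL divergence, exploiting the factorization $\mathcal{D}(z,y) = \mathcal{D}(y)\,\mathcal{D}(z \mid y)$. This is the standard chain rule for KL divergence, so the argument is essentially algebraic once we unpack the definitions; there is no real obstacle beyond bookkeeping.

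First I would write the KL divergence as an expectation under $\mathcal{D}(z,y)$:
\begin{equation*}
d(\mathcal{D}(z,y) \| \mathcal{D}'(z,y)) = \mathbb{E}_{(z,y) \sim \mathcal{D}}\!\left[\log \frac{\mathcal{D}(z,y)}{\mathcal{D}'(z,y)}\right].
\end{equation*}
Then I would factor both the numerator and denominator inside the logarithm as joint = marginal times conditional, namely $\mathcal{D}(z,y) = \mathcal{D}(y)\mathcal{D}(z \mid y)$ and similarly for $\mathcal{D}'$. This turns the log ratio into a sum of two log ratios.

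Next, I would split the resulting expectation into two pieces by linearity. The first piece, $\mathbb{E}_{(z,y) \sim \mathcal{D}}[\log(\mathcal{D}(y)/\mathcal{D}'(y))]$, has an integrand that depends only on $y$, so integrating out $z$ collapses it to $\mathbb{E}_{y \sim \mathcal{D}(y)}[\log(\mathcal{D}(y)/\mathcal{D}'(y))] = d(\mathcal{D}(y) \| \mathcal{D}'(y))$. For the second piece, $\mathbb{E}_{(z,y) \sim \mathcal{D}}[\log(\mathcal{D}(z \mid y)/\mathcal{D}'(z \mid y))]$, I would use the tower property by conditioning on $y$: the inner expectation is taken with respect to $z \sim \mathcal{D}(z \mid y)$, which by definition yields $d(\mathcal{D}(z \mid y) \| \mathcal{D}'(z \mid y))$, and the outer expectation is over $y \sim \mathcal{D}(y)$.

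Combining the two pieces gives exactly the stated identity. Note that the lemma is actually an equality (not just an upper bound as the wording "bound" in the statement suggests), and the proof makes no use of any assumptions beyond absolute continuity of $\mathcal{D}'$ with respect to $\mathcal{D}$ (so that all the logarithms are well-defined); this should be remarked in passing. The only minor subtlety is justifying the exchange between summation/integration and the decomposition of the log ratio, which is routine under the standard convention $0 \log 0 = 0$, so I would only mention it briefly.
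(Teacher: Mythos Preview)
Your argument is correct and is exactly the standard chain rule for KL divergence. In fact, the paper does not supply a proof of this lemma at all; it is stated in the appendix and then immediately used to derive Corollary~\ref{corollay1}, so your write-up would fill a gap rather than duplicate anything.

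One small slip worth fixing: you write that the only regularity needed is ``absolute continuity of $\mathcal{D}'$ with respect to $\mathcal{D}$,'' but for $d(\mathcal{D}\|\mathcal{D}')$ to be finite you need the opposite direction, namely $\mathcal{D} \ll \mathcal{D}'$ (and likewise $\mathcal{D}(y) \ll \mathcal{D}'(y)$ and $\mathcal{D}(\cdot\mid y) \ll \mathcal{D}'(\cdot\mid y)$ for $\mathcal{D}(y)$-almost every $y$). With that corrected, your proposal is complete.
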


Based on Lemma \ref{jsd_decompose}, we can decompose $\mathcal{D}(z,y)$ into marginal and conditional distributions to motivate more practical TDG algorithms. For example, when it is decomposed into 
class prior $\mathcal{D}(y)$ and semantic  conditional distribution $\mathcal{D}(z|y)$, we have the following Corollary. 

\begin{corollary}
\label{corollay1}
Following the assumptions of Theorem 1, the target risk can be bounded by
{\small
\begin{align*}
 &R_{\mathcal D_t}(h) \leq  R_{\mathcal D_t^{\mathcal{K}^*}}(h)+\frac{G}{\sqrt{2(m-1)}}\Bigg(\underbrace{\sqrt{\sum _{i=2}^{m}d_{}(\mathcal D_{i}(y)||\mathcal D^{\mathcal{K}^*}_{i}(y))}}_{\bold{I}} + \sqrt{(m-1)\lambda}+\underbrace{\sqrt{\sum _{i=2}^{m}\mathbb E_{y\sim \mathcal D_{i}^{\mathcal{K}^*}(y)}d_{}(\mathcal D_{i}(z|y)||\mathcal D^{\mathcal{K}^*}_{i}(z|y))}}_{\bold{II}}\Bigg).
\end{align*}
}
\end{corollary}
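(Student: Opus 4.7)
The plan is to start from the bound of Theorem~\ref{theoremds} and decompose each joint KL divergence appearing under the square root into a class-prior piece and a class-conditional piece using Lemma~\ref{jsd_decompose}. Concretely, Theorem~\ref{theoremds} already delivers
\begin{align*}
R_{\mathcal{D}_t}(h) \le R_{\mathcal{D}_t^{\mathcal{K}^*}}(h)+\frac{G}{\sqrt{2(m-1)}}\Bigg(\sqrt{\sum_{i=2}^{m} d(\mathcal{D}_i||\mathcal{D}_i^{\mathcal{K}^*})} + \sqrt{(m-1)\lambda}\,\Bigg),
\end{align*}
so proving the corollary reduces to rewriting each joint-distribution KL $d(\mathcal{D}_i||\mathcal{D}_i^{\mathcal{K}^*})$ in a form that separately exposes the marginal and conditional shifts.

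Next I would apply Lemma~\ref{jsd_decompose} with $\mathcal{D}=\mathcal{D}_i$ and $\mathcal{D}'=\mathcal{D}_i^{\mathcal{K}^*}$, yielding
\begin{align*}
d(\mathcal{D}_i||\mathcal{D}_i^{\mathcal{K}^*}) = d(\mathcal{D}_i(y)||\mathcal{D}_i^{\mathcal{K}^*}(y)) + \mathbb{E}_{y\sim \mathcal{D}_i(y)}\, d(\mathcal{D}_i(z|y)||\mathcal{D}_i^{\mathcal{K}^*}(z|y)).
\end{align*}
Summing this identity over $i=2,\ldots,m$ and then invoking the elementary subadditivity inequality $\sqrt{a+b}\le \sqrt{a}+\sqrt{b}$ for nonnegative $a,b$ separates the single square root into the two contributions labelled \textbf{I} (the class-prior KL term) and \textbf{II} (the expected class-conditional KL term). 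The $\sqrt{(m-1)\lambda}$ term is inherited unchanged from Theorem~\ref{theoremds}, and combining the three pieces gives the stated bound.

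The one step I would check carefully, and which I expect to be the main subtlety rather than a genuine obstacle, is the reference measure for the outer expectation in term \textbf{II}. Lemma~\ref{jsd_decompose} naturally produces $\mathbb{E}_{y\sim \mathcal{D}_i(y)}$, whereas the corollary writes $\mathbb{E}_{y\sim \mathcal{D}_i^{\mathcal{K}^*}(y)}$ (and Theorem~\ref{eqn:inequality} in the main text uses $\mathcal{D}_i(y)$, consistent with the lemma). The cleanest reconciliation is to observe that $\mathcal{K}^*$ acts only on the Koopman embedding $z=\mathcal{G}\circ\phi(x)$ and not on the label coordinate, so the label marginal is preserved, $\mathcal{D}_i(y)=\mathcal{D}_i^{\mathcal{K}^*}(y)$; under this natural identification the two expectations agree (and in fact term \textbf{I} vanishes, though keeping it retains the bound in its most general form). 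Modulo this bookkeeping point, the derivation is entirely algebraic: a single application of Lemma~\ref{jsd_decompose} followed by $\sqrt{a+b}\le\sqrt{a}+\sqrt{b}$ inside the bound of Theorem~\ref{theoremds}.
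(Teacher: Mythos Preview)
Your proposal is correct and follows exactly the route the paper intends: the paper does not spell out a proof for the corollary but simply states that it is obtained ``based on Lemma~\ref{jsd_decompose}'' applied inside the bound of Theorem~\ref{theoremds}, which is precisely your chain-rule decomposition followed by $\sqrt{a+b}\le\sqrt{a}+\sqrt{b}$. Your observation about the mismatch in the reference measure for term \textbf{II} is also apt---Lemma~\ref{jsd_decompose} yields $\mathbb{E}_{y\sim\mathcal{D}_i(y)}$ (as used in Theorem~\ref{eqn:inequality}), and the $\mathcal{D}_i^{\mathcal{K}^*}(y)$ in the corollary statement appears to be a typographical slip rather than a substantive difference.
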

\begin{remark}\label{rmk3}
{To generalize well to $\mathcal{D}_t$, Corollary~\ref{corollay1} suggests that a good mapping function should capture both label shifts (term I) and semantic conditional distribution shifts (terms II). In further, label shift can be eliminated by reweighting/resampling the instances according to the class ratios between domains~\cite{shui2021aggregating}. Then, we will have I = 0, and the upper bound can be simplified as Eq.~(\ref{eqn:bound}).
}\end{remark}


\begin{theorem}
\label{appendix:proof_of_loss_kl}
\textbf{(Restatement of Theorem~\ref{eqn:inequality})}
The optimization loss $J_{}$ defined in Eq.~(\ref{eqn:loss}), is the approximation of the upper bound of KL term in Def.~\ref{def1} and an inter-class distance loss, which implies

\begin{align*}
    \underbrace{- (K-1)\sum_{i=2}^{m}E_{y'\sim \mathcal D_{i}(y')} E_{ y\sim \mathcal 
    D_{i}^{\mathcal{K}^*}(y), 
         y\neq y'}  d(\mathcal{D}_{i+1}(z|y')||\mathcal{D}^{\mathcal{K}^*}_{i+1}(z|y))}_{\text{Inter-Class Distance loss}} 
    + \underbrace{(K-1)\sum_{i=2}^{m}\mathbb E_{y\sim \mathcal D_{i}(y)}  d_{}(\mathcal D_{i}(z|y)||\mathcal D^{\mathcal{K}^*}_{i}(z|y))}_{\text{KL-Divergence loss in Eq.~\ref{eqn:bound}}}
    \le J_{}
\end{align*}
\end{theorem}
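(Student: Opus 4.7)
The plan is to open up the negative log-likelihood inside $J$, apply a Jensen bound to the $\log\!\sum\exp$ term, pass to the population limit of the empirical averages, and then use a Gaussian class-conditional model to translate the resulting squared Euclidean distances into KL divergences. This mirrors the prototypical-network reading of Eq.~(\ref{eq1}), which is the mechanism by which the KL form is recovered.

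First I would expand each summand of $J$ via Eq.~(\ref{eq1}) as
\[
-\log \mathcal{D}(y^{i+1}_n=k\mid x^{i+1}_n)
= d_{\text{eu}}\bigl(\mathcal{G}\circ\phi(x^{i+1}_n),c^k_i\bigr)
+\log\sum_{k'=1}^K \exp\!\bigl(-d_{\text{eu}}(\mathcal{G}\circ\phi(x^{i+1}_n),c^{k'}_i)\bigr),
\]
and then lower-bound the log-sum-exp by Jensen's inequality applied to the uniform average over $k'\neq k$,
\[
\log\sum_{k'=1}^K e^{-d_{\text{eu}}(z,c^{k'}_i)}
\;\ge\;\log(K-1)-\frac{1}{K-1}\sum_{k'\neq k} d_{\text{eu}}(z,c^{k'}_i).
\]
This bounds each contribution to $J$ from below by $d_{\text{eu}}(z,c^k_i)-\tfrac{1}{K-1}\sum_{k'\neq k}d_{\text{eu}}(z,c^{k'}_i)$, plus an additive constant. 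As $N_B\to\infty$ the empirical inner average converges to $\mathbb{E}_{z\sim\mathcal{D}_{i+1}(z\mid y=k)}$, and each centroid $c^{k}_i$ converges to the mean $\mu^{\mathcal{K}^*,k}_{i+1}$ of the forecasted conditional $\mathcal{D}^{\mathcal{K}^*}_{i+1}(z\mid y=k)$.

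Second, adopting the Gaussian class-conditional model implicit in Eq.~(\ref{eq1})---each $\mathcal{D}_{i+1}(z\mid y)$ and $\mathcal{D}^{\mathcal{K}^*}_{i+1}(z\mid y)$ is isotropic Gaussian with a common covariance---the identities
\[
\mathbb{E}_{z\sim\mathcal{N}(\mu_1,I)}\|z-\mu_2\|^2=\|\mu_1-\mu_2\|^2+\text{const},\qquad
d_{\mathrm{KL}}\bigl(\mathcal{N}(\mu_1,I)\|\mathcal{N}(\mu_2,I)\bigr)=\tfrac{1}{2}\|\mu_1-\mu_2\|^2
\]
yield, up to a common offset,
\[
\mathbb{E}_{z\sim\mathcal{D}_{i+1}(z\mid y=k)}\bigl[d_{\text{eu}}(z,c^{k'}_i)\bigr]\;\propto\;
d_{\mathrm{KL}}\!\bigl(\mathcal{D}_{i+1}(z\mid y=k)\,\bigl\|\,\mathcal{D}^{\mathcal{K}^*}_{i+1}(z\mid y=k')\bigr)
\]
for every pair $(k,k')$. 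Setting $k'=k$ produces exactly the same-class KL divergence in Eq.~(\ref{eqn:bound}); the $k'\neq k$ choices produce the cross-class KL divergences in the Inter-Class Distance term.

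Third, I would reassemble the pieces: the uniform $\tfrac{1}{K}\sum_k$ appearing in $J$ represents $\mathbb{E}_{y\sim\mathcal{D}_i(y)}$ under the balanced episodic sampler, the $\tfrac{1}{K-1}\sum_{k'\neq k}$ represents $\mathbb{E}_{y\sim\mathcal{D}^{\mathcal{K}^*}_i(y),\,y\neq y'}$, and the outer sum $i=1,\dots,m-1$ in Eq.~(\ref{eqn:loss}) is reindexed to $j=2,\dots,m$ so it matches the statement. After absorbing the additive constants ($\log(K-1)$ and the trace term from the Gaussian identity) into the ``approximation'' referenced in the theorem, one recovers the claimed inequality with the scaling $1/(K-1)$ on $J$. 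The main obstacle is justifying the Gaussian class-conditional assumption, since it is what converts the Jensen-produced squared Euclidean distances into genuine KL divergences; without it one can only bound $J$ in terms of centroid distances, not KL terms. A secondary subtlety is aligning the class marginals $\mathcal{D}_i(y)$ and $\mathcal{D}^{\mathcal{K}^*}_i(y)$ with the uniform sampler $\tfrac{1}{K}$, which requires (or is absorbed by) a balanced-label assumption.
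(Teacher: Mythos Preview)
Your proposal follows essentially the same route as the paper: expand the softmax negative log-likelihood, apply a Jensen/log-sum-exp lower bound to obtain a signed combination of squared Euclidean distances between query embeddings and class centroids, then invoke the isotropic-Gaussian class-conditional model (the paper cites \cite{shui2021aggregating} for this step) to identify those squared distances with KL divergences and the balanced episodic sampler with the expectations over $y,y'$. The only cosmetic difference is that the paper applies Jensen over \emph{all} $K$ summands (keeping the $k'=k$ term, which contributes $0$ after the log) rather than first dropping $k'=k$ as you do; this produces the $(K-1)$ prefactor directly and avoids carrying your additive $\log(K-1)$ constant, and the paper also collapses $\tfrac{1}{N_B}\sum_n(z_n-c)^2$ to $(\hat\mu-c)^2$ in one step rather than passing through the population expectation---but structurally the two arguments coincide.
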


\begin{proof}
{\small
    \begin{align}
J_{}&=\sum_{i=1}^{m-1}\sum_{k=1}^K\sum_{n=1}^{N_B}-\frac{1}{KN_B}\log \mathcal{D}_{}(y_{n}^{i+1}=k|x_n^{i+1})\nonumber\\
&=\sum_{i=1}^{m-1}\sum_{k=1}^K\sum_{n=1}^{N_B}-\frac{1}{KN_B}\log\frac{\exp{-d_{\text{eu}}(\mathcal{G}\circ\phi(x_n^{i+1}),c^k_i)}}{\sum_{k'=1}^K\exp{-d_{\text{eu}}(\mathcal{G}\circ\phi(x_n^{i+1}),c^{k'}_i)}}\nonumber\\
&=\sum_{i=1}^{m-1}\sum_{k=1}^K\sum_{n=1}^{N_B}\frac{1}{KN_B}-\log\frac{1}{\sum_{k'=1}^K \frac{\exp{-d_{\text{eu}}(\mathcal{G}\circ\phi(x_n^{i+1}),c^{k'}_i)}}{\exp{-d_{\text{eu}}(\mathcal{G}\circ\phi(x_n^{i+1}),c^k_i)}}}\nonumber\\
&=\sum_{i=1}^{m-1}\sum_{k=1}^K\sum_{n=1}^{N_B}\frac{1}{KN_B}\log\sum_{k'=1}^K \frac{\exp{-d_{\text{eu}}(\mathcal{G}\circ\phi(x_n^{i+1}),c^{k'}_i)}}{\exp{-d_{\text{eu}}(\mathcal{G}\circ\phi(x_n^{i+1}),c^k_i)}}\nonumber \\
&\ge \sum_{i=1}^{m-1}\sum_{k=1}^K\sum_{n=1}^{N_B}\frac{1}{KN_B} \sum_{k'=1}^K\log\frac{\exp{-d_{\text{eu}}(\mathcal{G}\circ\phi(x_n^{i+1}),c^{k'}_i)}}{\exp{-d_{\text{eu}}(\mathcal{G}\circ\phi(x_n^{i+1}),c^k_i)}} \label{eq:jensen}\\
&=\sum_{i=1}^{m-1}\sum_{k=1}^K\sum_{n=1}^{N_B}\frac{1}{KN_B} \big(\sum_{k'=1}^K -d_{\text{eu}}(\mathcal{G}\circ\phi(x_n^{i+1}),c^{k'}_i) + K\cdot d_{\text{eu}}(\mathcal{G}\circ\phi(x_n^{i+1}),c^k_i)\big)\nonumber \\
&=\sum_{i=1}^{m-1}\sum_{k=1}^K\sum_{n=1}^{N_B}\frac{1}{KN_B} \big(\sum_{k'=1}^K - (\mathcal{G}\circ\phi(x_n^{i+1})-c_i^{k'})^2 + K (\mathcal{G}\circ\phi(x_n^{i+1})-c_i^{k})^2\big) \label{eq:euclidean}\\
&=\sum_{i=1}^{m-1}\sum_{k=1}^K\sum_{n=1}^{N_B}\frac{1}{KN_B} \big(\sum_{k'=1}^K - (\mathcal{G}\circ\phi(x_n^{i+1})-c_i^{k'})^2 + K (\mathcal{G}\circ\phi(x_n^{i+1})-c_i^{k})^2\big)\nonumber\\
&=\sum_{i=1}^{m-1}\sum_{k=1}^K\frac{1}{K} \big(\sum_{k'=1}^K - (\sum_{n=1}^{N_B}\frac{\mathcal{G}\circ\phi(x_n^{i+1})}{N_B}-c_i^{k'})^2 + K (\sum_{n=1}^{N_B}\frac{\mathcal{G}\circ\phi(x_n^{i+1})}{N_B}-c_i^{k})^2\big)\nonumber \\
&=\sum_{i=1}^{m-1}\sum_{k=1}^K\frac{1}{K} \big(\sum_{k'=1 ,k'\not= k}^K - (\hat{\mu}^{i+1}_k-c_i^{k'})^2 + (K-1) (\hat{\mu}^{i+1}_k-c_i^{k})^2\big)\label{eq:estimate}
\\
\end{align}
\begin{align}
&=- \sum_{i=1}^{m-1}\sum_{k=1}^K   \frac{1}{K} \sum_{ \tiny
 k'=1,
  k'\not= k }^Kd(\mathcal{D}_{i+1}(z|y=k')||\mathcal{D}^{\mathcal{K}^*}_{i+1}(z|y=k))
    + \sum_{i=1}^{m-1}\sum_{k=1}^K \frac{K-1}{K}d_{}(\mathcal D_{i+1}(z|y=k)||\mathcal D^{\mathcal{K}^*}_{i+1}(z|y=k))\label{eq:to_kl}\\
&=- (K-1)\sum_{i=1}^{m-1}\mathbb E_{y'\sim \mathcal D_{i}(y')} E_{ y\sim \mathcal 
    D_{i}^{\mathcal{K}^*}(y), 
         y\neq y'}  d(\mathcal{D}_{i+1}(z|y')||\mathcal{D}^{\mathcal{K}^*}_{i+1}(z|y))
    + (K-1)\sum_{i=1}^{m-1}\mathbb E_{y\sim \mathcal D_{i}(y)}  d_{}(\mathcal D_{i+1}(z|y)||\mathcal D^{\mathcal{K}^*}_{i+1}(z|y))\label{eq:avg_exp}\\
&= - (K-1)\sum_{i=2}^{m}E_{y'\sim \mathcal D_{i}(y')} E_{ y\sim \mathcal 
D_{i}^{\mathcal{K}^*}(y), 
     y\neq y'}  d(\mathcal{D}_{i}(z|y')||\mathcal{D}^{\mathcal{K}^*}_{i}(z|y))
+ (K-1)\sum_{i=2}^{m}\mathbb E_{y\sim \mathcal D_{i}(y)}  d_{}(\mathcal D_{i}(z|y)||\mathcal D^{\mathcal{K}^*}_{i}(z|y))\nonumber
\end{align}
}

In Eq.~(\ref{eq:jensen}), we apply Jensen Inequality; In Eq.~(\ref{eq:euclidean}), we replace $d_{\text{eu}}(\cdot,\cdot)$ with the euclidean distance; In Eq.~(\ref{eq:estimate}), we define $\hat{\mu}^{i+1}_k=\sum_{n=1}^{N_B}\frac{\mathcal{G}\circ\phi(x_n^{i+1})}{N_B}$, which is the empirical estimation of $\mu_{x^{i+1}|y^{i+1}=k}=\E[x^{i+1}|y^{i+1}=k]$; In Eq.~(\ref{eq:to_kl}), since we can approximate the conditional distribution on $X$ as Gaussian distribution with identical Covariance in Section 4 of~\cite{shui2021aggregating}, KL distance $d(D_{i+1}(z|y=k')||\mathcal{D}^{\mathcal{K}^*}_{i+1}(z|y=k))$ can be approximated by $(c_i^{k'}-\hat{\mu}^{i+1}_k)^2+\text{const}$  and $d_{}(\mathcal D_{i+1}(z|y=k)||\mathcal D^{\mathcal{K}^*}_{i+1}(z|y=k))$ can be approximated by $(c_i^{k}-\hat{\mu}^{i+1}_k)^2+\text{const}$. The constants can be canceled out with the subtractions between the two terms, which also does not affect the optimization objective with the replacement. In Eq.~(\ref{eq:avg_exp}), we use average to approximate expectations w.r.t. $y$ and $y'$. Hence, we conclude the proof.


\end{proof}

\section{Additional Experiments}

\subsection{Further Investigation of Interpolation and Extrapolation}
\label{appendix:inter_extra}

In Table \ref{tab:domain_num_RMNIST_table}, we can observe that the performances of ERM are not improved as the number of domains increases, which is counter-intuitive. We speculate that this is due to the ``extrapolation" nature of TDG.  To further investigate its impact on the generalization performance on the target domain, we compare the following three settings on the RMNIST dataset: 

\begin{enumerate}
  \item \textbf{TKNets-Evolving (Extrapolation).} Same as TKNets in Section~\ref{sec:exp}, where the target domain $\mathcal{D}_t = \mathcal{D}_{i+1}$.
  \item \textbf{ERM-Evolving (Extrapolation).} Same as ERM in Section~\ref{sec:exp}, where the target domain $\mathcal{D}_t = \mathcal{D}_{i+1}$.
  \item \textbf{ERM-Interpolation.} The ERM approach uses the domain in the middle as the target domain, and the rest domains as the source domains. 
\end{enumerate}

Note that (1) and (2) are different algorithms with the same problem setup, and (2) and (3) use the same algorithm but with different problem setups.

We vary the number of domain and domain distances, and the results are reported in Fig.~\ref{erm_distance}, from which we have the following observations:

\begin{enumerate}
    \item The overall trend of the TKNets is going up as the number of domains increases. 
    \item The performances of ERM-Evolving do not increase as a function of the number of domain distances, which is consistent with the results in Table~\ref{tab:domain_num_RMNIST_table}.
    \item The performances of ERM-Interpolation increase as a function of the number of domain distance
    \item The improvements of  TKNets and ERM-Interpolation are not obvious once having a sufficient amount of domains (e.g., \# of domains = 7 for ERM-Interpolation). We conjecture that it is because the evolving pattern of RMNIST is relatively simple. Thus, a small number of domains are sufficient to learn such a pattern, and increasing the number of domains may not necessarily improve the performances of TKNets and  ERM-Interpolation anymore. 
\end{enumerate}

The results indicate for extrapolation, having more domains does not necessarily help learn an invariant representation if we do not leverage the evolving pattern. Intuitively, as the target domain is on the ``edge" of the domains, having more domains also indicates that it is further away from the “center” of the source domains, which may even make the generalization even more challenging. On the other hand, if the target domain is ``among" the source domains (i.e., when we perform ``interpolation"), the source domains may act as augmented data which improve the generalization performance. In other words, if the more source domains will be beneficial for ``interpolation" but not necessarily for ``extrapolation" if the evolving pattern is not properly exploited.



\begin{figure*}[htbp]
		\centering 
		\subfloat[Distance = $3^\circ$]{\includegraphics[width=0.25\textwidth]{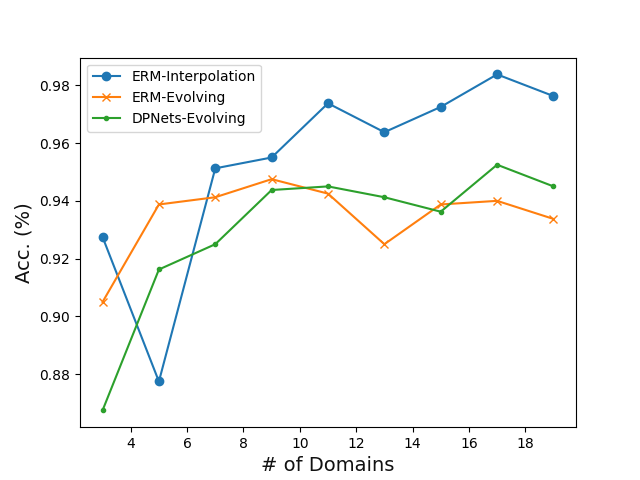}\label{erm_distance:3}}
		\subfloat[Distance = $7^\circ$]{\includegraphics[width=0.25\textwidth]{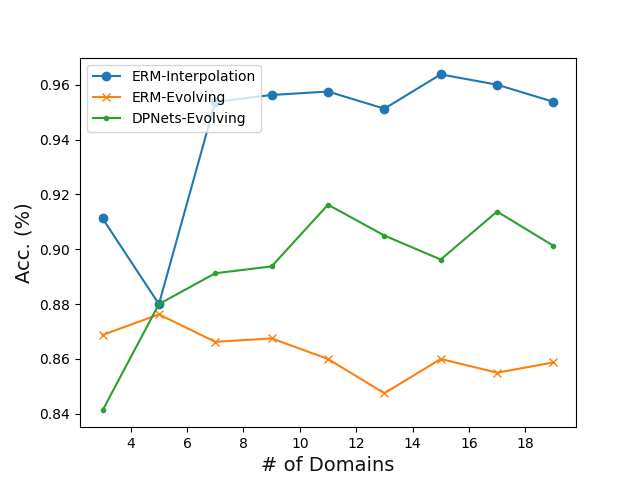}\label{erm_distance:7}}
		\subfloat[Distance = $11^\circ$]{\includegraphics[width=0.25\textwidth]{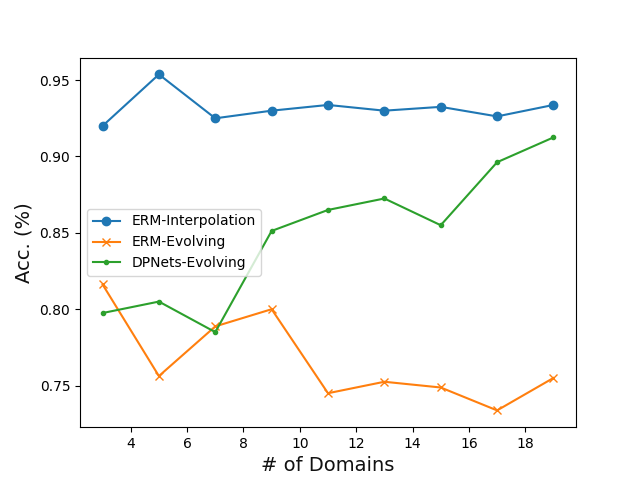}\label{erm_distance:11}}
		\subfloat[Average]{\includegraphics[width=0.25\textwidth]{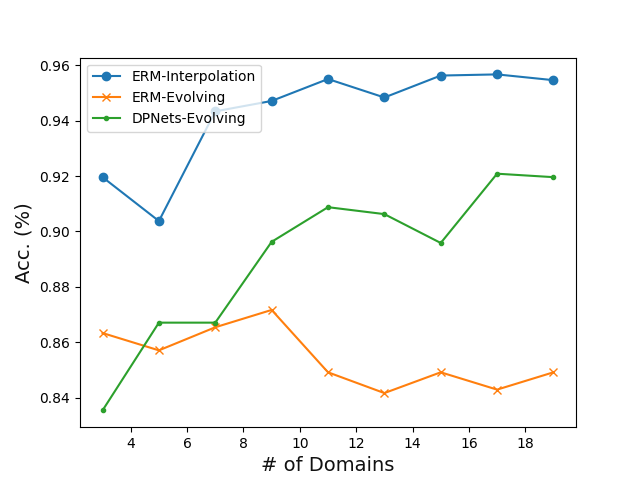}\label{erm_distance:ave}}
\caption{Performance of algorithms when numbers of domains changes.}
\label{erm_distance}
\end{figure*}

\subsection{Incorporating Domain Information into ERM.}
\label{appendix:domain_info}

The ERM in Section~\ref{sec:exp} does not leverage the index information of the source domains. In order to make a more fair comparison, we incorporate the index information into ERM. Specifically, we investigate three strategies for incorporating the index information used in the literature \cite{liu2020heterogeneous, long2017conditional,  zheng2018unsupervised, wen2019exploiting}: (1) Index Concatenation (Fig. \ref{index_info:one}), where the domain index is directly concatenated as a one-dimension feature \cite{li2021learning}; (2) One-hot Concatenation (Fig. \ref{index_info:two}), where the domain index is first one-hot encoded \cite{liu2020heterogeneous} and then concatenated to the original features \cite{long2017conditional}; (3) Outer product (Fig. \ref{index_info:three}), were flattened the outer product of original features and the one-hot indexes are used as the final input \cite{shui2021benefits}.


\begin{figure*}[htbp]
		\centering 
		\subfloat[Index Concatenation ]{\includegraphics[width=0.3\textwidth]{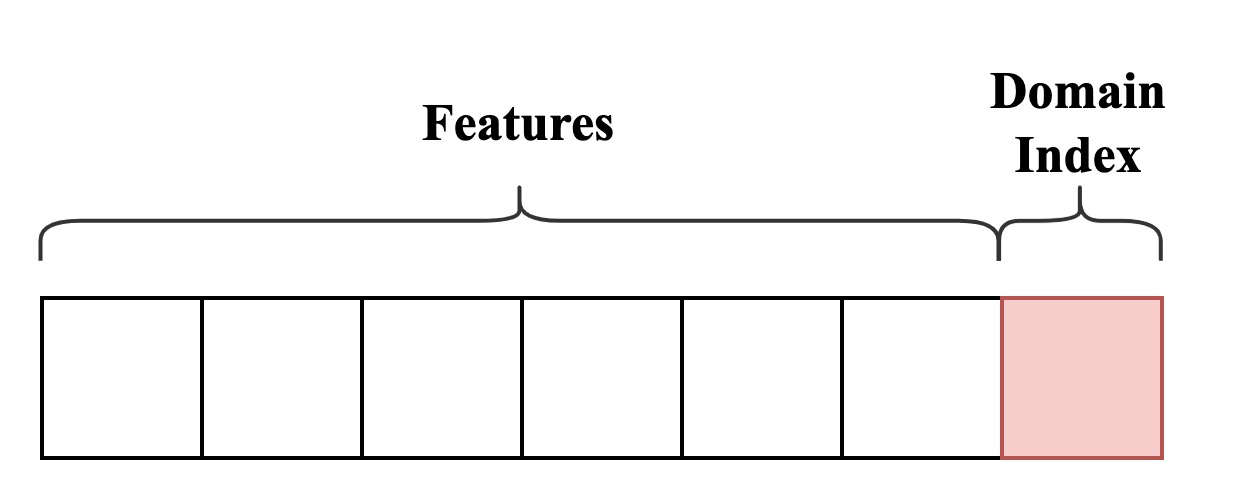}\label{index_info:one}}
		\subfloat[One-hot Concatenation]{\includegraphics[width=0.3\textwidth]{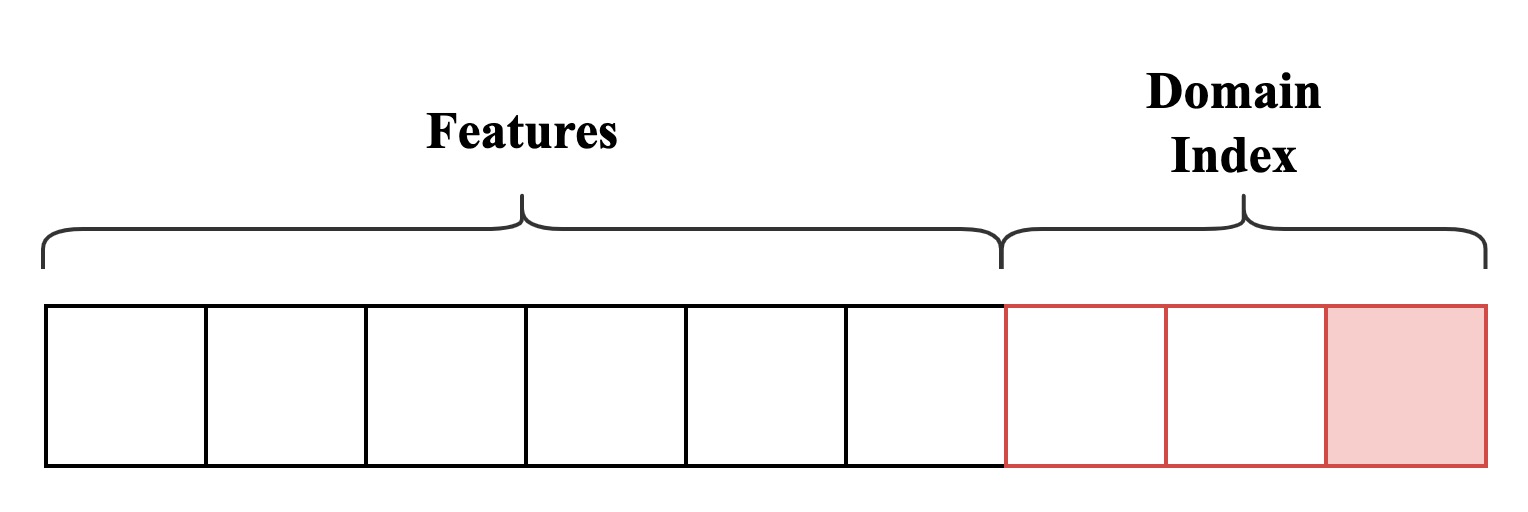}\label{index_info:two}}
		\subfloat[Outer Product]{\includegraphics[width=0.3\textwidth]{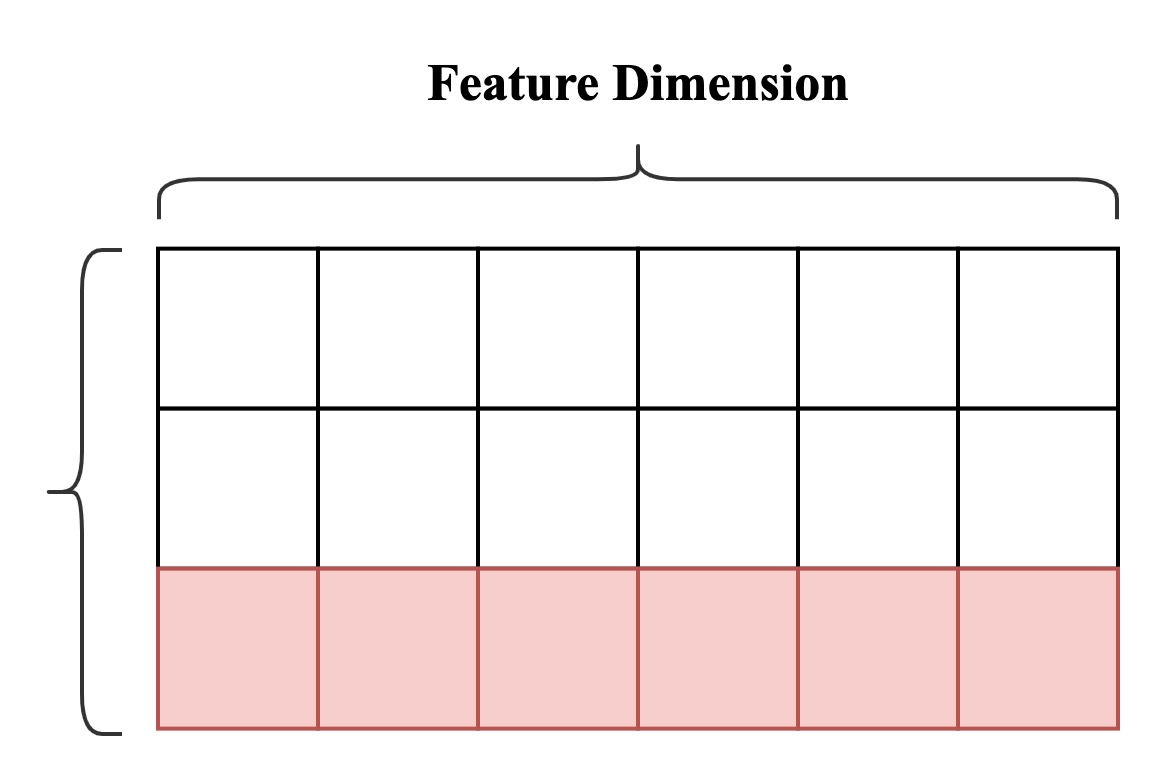}\label{index_info:three}}
\caption{Three domain index information incorporation strategies.}
\label{index_info}
\end{figure*}

We evaluate the algorithms on the EvolCircle and RPlate datasets and the results are reported in Table~\ref{tab:domain_index}. The experimental results verify the advantage of our algorithm in exploiting evolving information. We can observe that the improvements induced by incorporating the domain index are marginal, which indicates that it cannot properly leverage the evolving pattern of the environment.
 
\begin{figure}[t]
		\centering 
		\subfloat[ERM]{\includegraphics[width=0.32\textwidth]{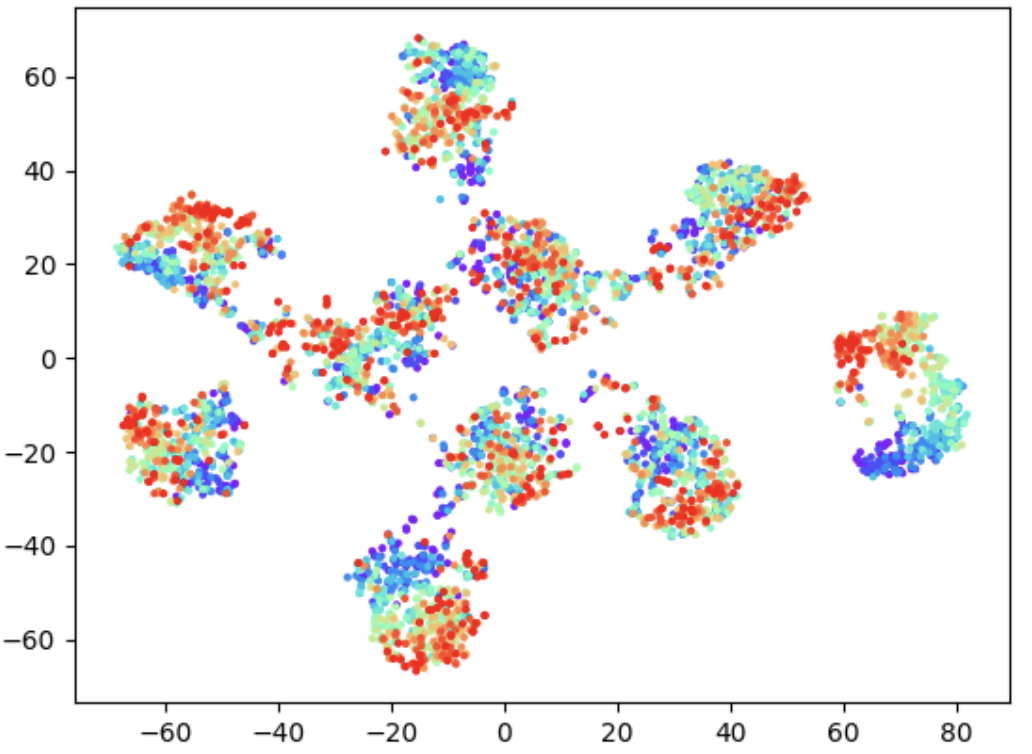}}
		\subfloat[TKNets]{\includegraphics[width=0.32\textwidth]{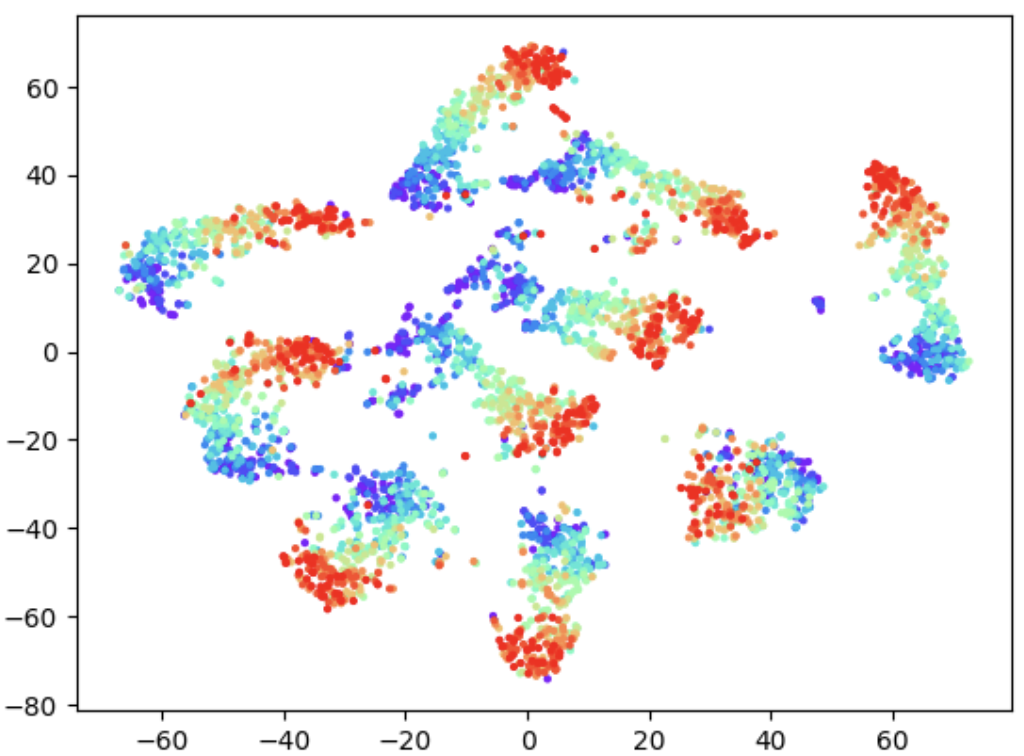}}
\caption{T-SNE Visualization of embedded RMNIST data.}
\label{tsne}
\end{figure}

\begin{table}[t]
    \caption{Performance of the traditional DG algorithms with domain index information incorporated.}
    \label{tab:domain_index}
    \begin{center}
    \adjustbox{max width=0.45\textwidth}{%
    \begin{tabular}{lccccccc}
    \toprule
    \textbf{Strategy}         & \textbf{EvolCircle}  & \textbf{RPlate} & \textbf{Average}              \\
    \midrule
    ERM       &72.7 $\pm$ 1.1              &63.9 $\pm$ 0.9              & 68.3                                  \\
    ERM + One-Dimension        &73.6 $\pm$ 0.6              &64.9 $\pm$ 0.8              & 69.3                                 \\
    ERM + One-Hot              &74.6 $\pm$ 0.3              &64.0 $\pm$ 0.3               & 69.3                                  \\
    ERM + Outer Product        &74.6 $\pm$ 0.4              &65.3 $\pm$ 0.2              & 70.0                                  \\
    TKNets (Ours)              & \textbf{94.2 $\pm$ 0.9}& \textbf{95.0 $\pm$ 0.5}& \textbf{94.6}                                  \\
    \bottomrule
    \end{tabular}}
    \end{center}
    \end{table}

\subsection{Ablation Study}
\textbf{The Impact of the Choice of Predefined Measurement Functions and Learned Measurement Functions} Predefined measurement functions can be polynomial functions, exponential functions, sine functions, etc. The learned measurement functions are trainable nonlinear multi-layer perceptrons. \cite{wang2023koopman} shows the predefined measurement functions outperform the model with learned measurement functions in time-series forecasting tasks. However, in TDG tasks, predefined measurement functions and learned measurement functions can outperform the others in different datasets in Table~\ref{Table:meas func}. Hence, we suggest the choice of the type of measurement function can be a hyperparameter depending on the dataset.

\begin{table}[t!]
\caption{Performance of Predefined / Learned Measurement Functions }
\label{Table:meas func}
\setlength{\tabcolsep}{3.5pt}
\begin{center}
\begin{sc}
\scalebox{0.9}{
\adjustbox{max width=0.42\textwidth}{\begin{tabular}{cccc}
\toprule

Dataset& RMNIST & Portrait & CoverType \\
\midrule
Predefined & 86.6 $\pm$ 0.2  & \textbf{97.2 $\pm$ 0.0} & \textbf{73.8 $\pm$ 1.0  }   \\
Learned  & \textbf{87.5 $\pm$ 0.1} &  96.4 $\pm$ 0.0 & 72.5 $\pm$ 0.8\\

\bottomrule
\end{tabular}}}
\end{sc}
\end{center}
\end{table}

\section{T-SNE visualization over evolving domains.} 
In this part, we investigate the ability of TKNets and DG methods in distilling domain-evolving information. Learning an invariant representation across all domains is a common motivation of domain generalization approaches. While in the TDG scenario, we need to leverage the evolving pattern to improve the generalization process. Here, we use t-SNE to visualize, respectively, the representations learned from the second-to-last layer by ERM and TKNets in Fig. \ref{tsne}. The colors from red to blue correspond to the domain's index from 1 to 30. The feature visualizations demonstrate that TKNets can keep the domain evolving even in the last layer of the network, which makes it possible to leverage that knowledge. On the contrary, the evolving pattern learned by the ERM is less obvious. The results further prove that ERM can not leverage evolving information well.

\section{Implementation Details}
\label{appendix:implement detail}

We implement our algorithm based on \cite{domainbed}. To justify algorithm comparison between baselines and our algorithm, we adopted a random search of 20 trials for the hyper-parameter distribution. For each parameter combination, 5 repeated experiments are conducted. Then, we report the highest average performance for each algorithm-dataset pair. In this way, all parameters are automatically selected without human intervention, making the comparison of experimental results of different algorithms on different data fair and reliable. Almost all backbone and setting are following \cite{domainbed} except the following. In one single experiment, the model structure of $
\phi$ keeps the same. For EvolCircle and RPlate, we only use one single-layer network to make the classifier linear for all algorithms. For other datasets, networks are randomly chosen based on the random search algorithm. For Portrait, RMNIST, and Cover Type datasets, the learning rate is uniformly sampled from $[10^{-5},10^{-3.5}]$. The learning rate of the other datasets is uniformly sampled from $[10^{-4.5},10^{-2.5}]$. Experiments are conducted under at most 32x P100 parallelly.

\newpage

\section{TKNets Illustrative Diagram}
\begin{figure*}[h!]
		\centering 
  \includegraphics[width=0.65\textwidth]{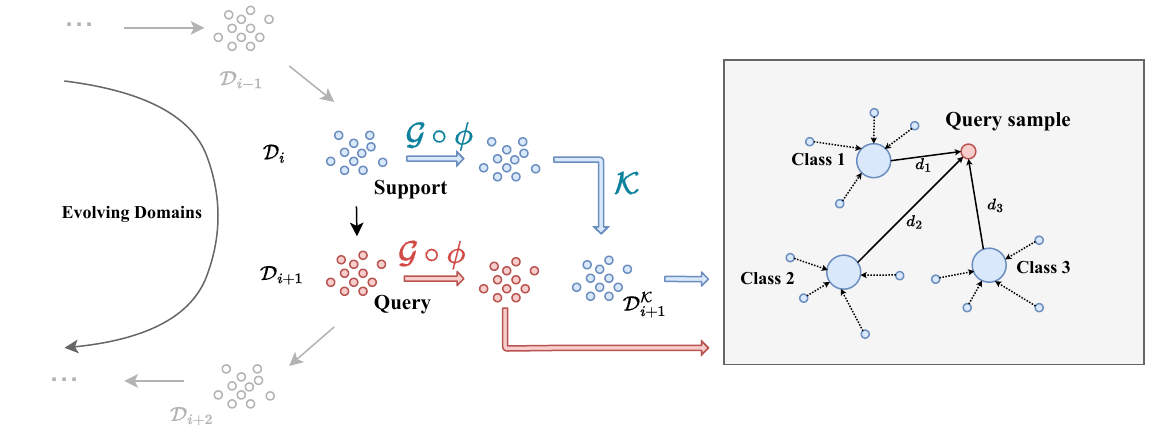}
   \caption{Data from \(\mathcal{D}_i\) and \(\mathcal{D}_{i+1}\) is mapped into the Koopman Space by the encoder \(\phi\) and predefine nonlinear measurement functions \(\mathcal{G}\) sequentially. Subsequently, the Koopman Space Representations \(\mathcal{D}_{i+1}^{\mathcal{K}}\) is constructed using the Koopman Operator \(\mathcal{K}\). The process of aligning \(\mathcal{D}_{i+1}(z|y)\) and \(\mathcal{D}_{i+1}^{\mathcal{K}}(z|y)\) suggested by Theorem 4.2, is achieved by minimizing the distance between samples from class \(k\) in domain \(i+1\) with the \(k\)-th class centroid calculated from \(\mathcal{D}_{i+1}^{\mathcal{K}}\), as proven in Theorem 4.3.}
\end{figure*}

\end{document}